\newcolumntype{P}[1]{>{\centering\arraybackslash}p{#1}}
\theoremstyle{plain}
\newtheorem{theorem}{Theorem}[section]
\newtheorem{proposition}[theorem]{Proposition}
\theoremstyle{definition}
\newtheorem{assumption}[theorem]{Assumption}
\theoremstyle{remark}
\DeclareMathOperator*{\argmax}{arg\,max}
\newcommand{\piaux}{\pi_{R}\xspace}
\definecolor{es-blue}{rgb}{0,0.4,0.8}
\icmltitlerunning{TGRL: An Algorithm for Teacher Guided Reinforcement Learning}
\begin{document}
\twocolumn[
\icmltitle{TGRL: An Algorithm for Teacher Guided Reinforcement Learning}




\begin{icmlauthorlist}
\icmlauthor{Idan Shenfeld}{MIT}
\icmlauthor{Zhang-Wei Hong}{MIT}
\icmlauthor{Aviv Tamar}{Technion}
\icmlauthor{Pulkit Agrawal}{MIT}
\end{icmlauthorlist}

\icmlaffiliation{MIT}{Improbable AI Lab, Massachusetts Institute of Technology, Cambridge, USA}
\icmlaffiliation{Technion}{Technion - Israel Institute of Technology, Haifa, Israel}
\icmlcorrespondingauthor{Idan Shenfled}{idanshen@mit.edu}

\icmlkeywords{Machine Learning, ICML}

\vskip 0.3in
]
\printAffiliationsAndNotice{} 

\begin{abstract}
Learning from rewards (i.e., reinforcement learning or RL) and learning to imitate a teacher (i.e., teacher-student learning) are two established approaches for solving sequential decision-making problems. To combine the benefits of these different forms of learning, it is common to train a policy to maximize a combination of reinforcement and teacher-student learning objectives. However, without a principled method to balance these objectives, prior work used heuristics and problem-specific hyperparameter searches to balance the two objectives. We present a \textit{principled} approach, along with an approximate implementation for \textit{dynamically} and \textit{automatically} balancing when to follow the teacher and when to use rewards. The main idea is to adjust the importance of teacher supervision by comparing the agent's performance to the counterfactual scenario of the agent learning without teacher supervision and only from rewards. If using teacher supervision improves performance, the importance of teacher supervision is increased and otherwise it is decreased. Our method, \textit{Teacher Guided Reinforcement Learning} (TGRL), outperforms strong baselines across diverse domains without hyper-parameter tuning. The code is available at \url{https://sites.google.com/view/tgrl-paper/}

\end{abstract}

\section{Introduction}
\label{sec:intro}

In Reinforcement Learning (RL), an agent learns decision-making strategies by executing actions, receiving feedback in the form of rewards, and optimizing its behavior to maximize cumulative rewards. Such learning by trial-and-error can be challenging, particularly when rewards are sparse, or under partial observability~\cite{ madani1999undecidability, papadimitriou1987complexity}. A more data-efficient learning method is to directly supervise the agent with correct actions obtained by querying a \textit{teacher}, as exemplified by the imitation learning algorithm called DAgger~\cite{ross2011reduction}. Learning to mimic a teacher is significantly more data-efficient than reinforcement learning because it avoids the need to explore the consequences of different actions. 

However, learning from a teacher can be problematic when the teacher is sub-optimal or when it's impossible to perfectly mimic the teacher. In the first problematic case of a sub-optimal teacher,  because the agent attempts to mimic the teacher's actions perfectly, its performance is inherently limited by the teacher's performance. Developing methods for training agents that surpass their sub-optimal teachers is an active research area~\cite{agarwal2022beyond, kurenkov2019ac, rajeswaran2017learning}.
The second problem occurs when the agent is unable to mimic the teacher. It can happen in the common scenario when the teacher chooses actions based on \textit{privileged information} unavailable to the agent.
For example, the teacher may have access to additional sensors when training in simulation~\cite{lee2020learning, chen2021system, margolis2021jumping}, external knowledge bases~\cite{zhang2020deep}, or accurate state estimates during training~\cite{levine2015end}. 

In some scenarios, the agent can make up for the information gap with respect to the teacher by accumulating information from a history of observations~\cite{kumor2021sequential, swamy2022sequence}. However, in the most general scenario, just using the history is insufficient, and the agent must take information-gathering actions (i.e. explore) to acquire the information being used by the teacher before it can mimic it.
However, since the teacher never performs information-gathering actions, the agent cannot learn such actions by mimicking the teacher. As an example, consider the "Tiger Door" environment illustrated in Figure~\ref{fig:tiger_door}~\cite{littman1995learning, warrington2021robust}. The agent is placed in a maze with a goal cell (green), a trap cell (blue), and a button (pink). Reaching the goal and trap cells provide positive and negative rewards, respectively. The location of the goal and trap cells randomly switch locations every episode. The teacher is aware of the location of all cells, whereas the agent (or the student) cannot observe the goal/trap cell locations. Instead, the student can go to the pink button, an action that reveals the goal location. In this setup, the goal-aware teacher takes action to directly reach the goal. However, the student must deviate from the teacher's actions to reach the pink button -- a behavior that cannot be learned by imitation.    

\begin{figure}[t!]
    \centering
    \includegraphics[width=\columnwidth]{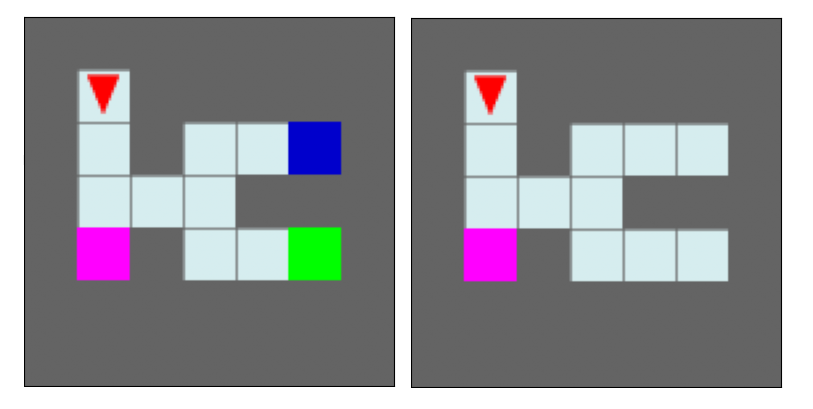}
    \caption{The Tiger Door environment. On the left is the teacher's observation, where the goal cell (in green) and the trap cell (in blue) are perceptible. On the right is the student's observation, where these cells are not visible, but there is a pink button; touching which reveals the other cells.
    }
    \label{fig:tiger_door}
\end{figure}

Consider the general scenario where the agent's optimality is measured by the rewards it accumulates. Both when the teacher is sub-optimal and when it cannot be mimicked, trying to imitate the teacher will result in sub-optimal policies. In these scenarios, a student with access to a reward function can benefit by jointly learning from both the reward and the teacher's supervision. Learning from rewards provides an incentive for the agent to deviate from a sub-optimal teacher to outperform it or carry out information gathering when learning from a privileged teacher. Thus, by combining both forms of learning, the agent can leverage the teacher's expertise to learn quickly but also try different actions to check if a better policy can be found. The balance between when to follow the teacher and use rewards is delicate and can substantially affect the performance (i.e., total accumulated rewards) of the learned policy. In the absence of a principled method to balance the two objectives, prior work resorted to task-specific hyperparameter tuning~\citep{weihs2021bridging, nguyen2022leveraging, agarwal2022reincarnating}.

In this work, we present a principled solution to automatically balance learning from rewards and a teacher. Our main insight is that supervision from the teacher should only be used when it improves performance compared to learning solely from reward. To realize this, in addition to training the \textit{main} policy that learns from both rewards and the teacher, we also train a second \textit{auxiliary} policy that learns the task by only optimizing rewards using reinforcement learning. At every training step, our algorithm compares the two policies. If the \textit{main} policy performs better, it indicates that utilizing the teacher is beneficial and the importance of learning from the teacher is increased. However, if the auxiliary policy performs better, the importance of the teacher's supervision in the main policy's objective is decreased. We call this algorithm for automatically adjusting the balance of imitation and RL objectives as \textit{Teacher Guided Reinforcement Learning (TGRL)}. 

We empirically evaluate TGRL on a range of tasks where learning solely from a teacher is inadequate and focus primarily on scenarios with a privileged teacher. The results show that TGRL is either comparable or outperforms existing approaches without the need for manual hyper-parameter tuning. The most challenging task we consider is robotic in-hand re-orientation of objects using only touch sensing. The superior performance of TGRL demonstrates its applicability to practical problems. Finally, we also present experiments showing the effectiveness of TGRL in learning from sub-optimal teachers.  

\section{Preliminaries}
\label{sec:prelim}

\textbf{Reinforcement learning (RL).} We consider the interaction between the agent and the environment as a discrete-time Partially Observable Markov Decision Process (POMDP)~\citep{kaelbling1998planning} consisting of state space $\mathcal{S}$, observation space $\Omega$, action space $\mathcal{A}$, state transition function $\mathcal{T}: \mathcal{S}\times\mathcal{A} \to \Delta(\mathcal{S})$, reward function $R:\mathcal{S}\times\mathcal{A}\to \mathbb{R}$, observation function $\mathcal{O}:\mathcal{S}\to\Delta(\Omega)$, and initial state distribution $\rho_0:\Delta(\mathcal{S})$. The environment is initialized at an initial state $s_0\sim \rho_0$. At each timestep $t$, the agent observes the observation $o_t \sim O(\cdot|s_t), o_t \in \Omega$, takes action $a_t$ determined by the policy $\pi$, receives reward $r_t = R(s_t, a_t)$, transitions to the next state $s_{t+1} \sim \mathcal{T}(\cdot|s_t, a_t)$, and observes the next observation $o_{t+1} \sim O(\cdot|s_{t+1})$. The goal of RL~\citep{sutton2018reinforcement} is to find the optimal policy $\pi^*$ maximizing the expected cumulative rewards (i.e., expected return). Since the agent has access only to the observations and not to the underlying states, seminal work showed that the optimal policy may depend on the history of observations $\tau_t: \{o_0, a_0, o_1,a_1...o_t\}$, and not only on the current observation $o_t$~\citep{kaelbling1998planning}. Our aim is finding the optimal policy $\pi^*:\tau \to \Delta (\mathcal{A})$ that maximizes the following objective:
\begin{equation}
\label{eq:env_obj}
    \pi^* = \argmax_{\pi} J_R(\pi) := \mathbb{E}\left[ \sum^\infty_{t=0}\gamma^{t} r_t \right].
\end{equation}

\textbf{Teacher-Student Learning (TSL).} Suppose the agent (also referred to as the \textit{student} in this paper) has access to a \textit{teacher} that computes actions, $a_t^T \sim \bar{\pi}(\cdot | o^T_t)$, using an observation space that may be different from the student, $o^T_t \sim \tilde{O}(\cdot|s_t); o^T_t \in \Omega^T$. We are agnostic to how the teacher is constructed. In general, it's a black box that can be queried by the student for actions at any state the student encounters during training. Given such a $\bar{\pi}$, we aim to train the student policy, $\pi_{\theta}(\cdot | o_t)$, operating from observations, $o_t \in \Omega$. 

A straightforward way to train the student is to use supervised learning to match the teacher's actions~\cite{argall2009survey}, $\max_{\theta} \mathbb{E}_{\bar{\pi}} \log \pi_{\theta}(a_t^T | o_t)$, where $o_t$ is the student's observation and $a_t^T$ is the teacher's action computed from its observation, $o_t^{T}$, corresponding to the state $s_t$ obtained by rolling out the teacher. However, recent work found that better student policies can be learned by using reinforcement learning to maximize the sum of rewards, where the reward is computed as the cross-entropy between the teacher and student's action distributions~\citep{czarnecki2019distilling}. This leads to the following optimization problem:  

\begin{equation}
\label{eq:teacher_obj}
\max_\pi {J_I(\pi)} := \max_\pi {\mathbb E \left[ - \sum_{t=0}^H {\gamma^t H^X_t(\pi|\bar{\pi})} \right]}
\end{equation}
where $H^X_t(\pi|\bar{\pi})=-\mathbb E_{a\sim \pi(\cdot|\tau_t)}[\log \bar{\pi}(a|o^T_t)]$ is the Shannon cross-entropy, and for convenience in notation, $\pi_{\theta}$ is denoted as $\pi$. This objective is similar to DAgger~\cite{ross2011reduction} in optimizing the learning objective using the data collected by the student.

\textbf{Problems in Teacher-Student Learning}. To understand the problems in TSL, consider the recent result that implies that a student trained with TSL learns the statistical average of the teacher's actions for each observable state $o\in\Omega$: 
\begin{proposition} \label{pro:il_policy}
In the setting described above, denote $\pi^{TSL} = \argmax_\pi {J_I(\pi)}$ and $f(o^T):\Omega_T\rightarrow\Omega$ as the function that maps the teacher's observations to the student's observations. Then, for any $o^T\in \Omega_T$
with $o=f(o^T)$, we have that $\pi^{TSL}(o)=\mathbb E [\bar{\pi}(s)|o=f(o^T)]$. 
\end{proposition}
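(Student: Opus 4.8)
The plan is to exploit that, holding the policy-induced visitation distribution fixed, $J_I$ decouples across observations, and that at each observation the student faces a cross-entropy minimization whose unique solution is the posterior average of the teacher's action distribution. First I would unroll the objective over time, writing the outer expectation over trajectories rolled out with $\pi$:
$$J_I(\pi)=\sum_{t\ge 0}\gamma^{t}\,\mathbb{E}_{\tau_t\sim d^{\pi}_t}\!\left[\mathbb{E}_{s_t\sim p(\cdot|\tau_t)}\,\mathbb{E}_{o^T_t\sim\tilde O(\cdot|s_t)}\big(-H^X_t(\pi|\bar\pi)\big)\right].$$
Because the student's action at time $t$ is conditionally independent of $s_t$ given $\tau_t$, I can pull the expectation over the unobserved $o^T_t$ inside the action expectation; introducing the \emph{posterior-averaged teacher} $q_{\tau_t}(a):=\mathbb{E}[\bar\pi(a|o^T_t)\mid \tau_t]$, the per-step term at $\tau_t$, as a function of $\pi(\cdot|\tau_t)$, reduces up to an additive constant to $-\mathrm{KL}\big(q_{\tau_t}\,\|\,\pi(\cdot|\tau_t)\big)$. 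Specializing to the observation-indexed policies of the statement (equivalently, replacing the history $\tau_t$ by its last observation), $q_o(\cdot)=\mathbb{E}[\bar\pi(\cdot|o^T)\mid f(o^T)=o]$ is precisely the object $\mathbb{E}[\bar\pi(s)\mid o=f(o^T)]$ in the claim.

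Second, I would optimize observation by observation. Fix an $o$ reached with positive probability and collect the terms of $J_I$ in which $\pi(\cdot|o)$ appears; since the student commits to one distribution at $o$, maximizing $J_I$ requires, for every such $o$, minimizing $\mathrm{KL}\big(q_o\,\|\,\pi(\cdot|o)\big)$. By Gibbs' inequality this KL is nonnegative and vanishes iff the two arguments coincide, so the unique maximizer is $\pi^{TSL}(o)=q_o=\mathbb{E}[\bar\pi(s)\mid o=f(o^T)]$. Since this holds for every reachable $o$, it pins down $\pi^{TSL}$ on the whole reachable observation space up to null sets, which is the claim.

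The step I expect to be the main obstacle is justifying the decoupling used above: both $d^{\pi}_t$ and the posterior $p(o^T|o)$ defining $q_o$ depend on $\pi$, so changing $\pi(\cdot|o)$ also reshapes which observations are later visited and with what posteriors, and the per-observation problems are thus not independent in the literal sense of a supervised fit. I would close this gap with a best-response/fixed-point argument — verifying that $o\mapsto q_o$ is a best response to the visitation it induces and that no single-observation deviation can raise the discounted return, using that the time-$t$ term at any reachable $\tau_t$ is bounded above by $-H(q_{\tau_t})$ with equality attained only at $\pi(\cdot|\tau_t)=q_{\tau_t}$ (equivalently, a Bellman-optimality/policy-improvement argument on the induced MDP over histories) — or, in the DAgger spirit mentioned just before the statement, by reading $J_I$ as the per-iteration supervised objective with the visitation frozen, which makes the decoupling exact. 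A subtle but essential point along the way is the orientation of the cross-entropy: the teacher distribution must sit inside the expectation with the student's log-probability outside, as this is what makes the minimizer the mixture $q_o$ rather than a point mass at $\argmax_{a} q_o(a)$ — and it is where the substance of the proposition lies.
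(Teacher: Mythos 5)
The paper does not prove this proposition at all --- it defers entirely to \citet{weihs2021bridging} (Prop.~1) and \citet{warrington2021robust} (Thm.~1), both of which establish the per-observation marginalization result for a \emph{fixed} visitation distribution. Your core argument is exactly the one those references use: condition on the student observation $o$, average the teacher over the posterior $p(o^T\mid o)$ to get $q_o$, and apply Gibbs' inequality to conclude the unique minimizer of the cross-entropy at $o$ is $q_o$ itself. Your observation about the orientation of the cross-entropy is not a side remark but a genuine correction to the paper: as literally defined, $H^X_t(\pi|\bar\pi)=-\mathbb{E}_{a\sim\pi(\cdot|\tau_t)}[\log\bar\pi(a|o^T_t)]$ puts the student inside the expectation, which makes the per-observation term \emph{linear} in $\pi(\cdot|o)$; its maximizer over the simplex is then a point mass at $\argmax_a \mathbb{E}[\log\bar\pi(a|o^T)\mid o]$, not the mixture. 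The proposition only holds with the reversed orientation $-\mathbb{E}_{a\sim\bar\pi(\cdot|o^T)}[\log\pi(a|o)]$, which is what the cited works (and your KL reduction) actually use.

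The gap is in your proposed closure of the coupling problem. Your second route --- freezing the visitation and reading $J_I$ as the DAgger-style per-iteration supervised objective --- is correct and is precisely the setting in which the cited theorems hold; under a fixed $d(o)$ the objective decouples exactly and the argument is complete. But your first route does not work: establishing that $o\mapsto q_o$ is a best response to the visitation it induces only shows $q$ is a fixed point of the best-response map, not the global maximizer of $J_I$, and the claim that no single-observation deviation can raise the discounted return is false in general. A deviation at $o$ incurs a finite KL penalty locally but reshapes the distribution over future histories, and since the attainable per-step value $-H(q_{\tau'})$ varies across histories, a policy can profitably steer toward regions where the posterior-averaged teacher is low-entropy. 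So the Bellman-optimality argument on the history MDP yields an optimal $p$ of the form $p(a)\propto q(a)/(\mu-\gamma W(a))$, which equals $q$ only when the continuation values $W(a)$ are constant. The honest statement of the proposition therefore requires either fixing the sampling distribution or restricting to the fixed-point/equilibrium sense; your writeup correctly locates both pressure points but should commit to the frozen-visitation reading rather than the deviation argument.
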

\begin{proof}
See \citep{weihs2021bridging} proposition 1 or \citep{warrington2021robust} theorem 1.
\end{proof}
The two problems due to Proposition~\ref{pro:il_policy}: (i) Since the student's actions are the statistical average of the teacher's actions, it cannot outperform a sub-optimal teacher as there is no incentive to explore actions other than the teacher's. 
(ii) If the difference in observation spaces between the teacher and student is large, learning the statistical average can lead to sub-optimal performance. This is because the student cannot distinguish two different teachers' observations 
that appear identical in the student's observation space. As a result, the student policy does not mimic the teacher, but instead learns the \textit{average action}, which can lead to sub-optimal performance (Eq.~\ref{eq:env_obj})~\cite{kumor2021sequential, swamy2022sequence}. For example, in the Tiger Door environment, the student will follow the teacher until the second intersection (where the corridor splits into two paths for the two possible goal locations). The teacher policy takes a left or right action depending on where the goal is. Because the student does not observe the goal, it will learn to mimic the teacher's policy by assigning equal probability to actions leading to either of the sides. This policy is sub-optimal since the student will reach the goal only in $50\%$ of trials.  

\section{Method}
\label{sec:method}

As Teacher-Student Learning can lead to a sub-optimal student, to outperform the teacher, the student needs to explore actions different from the teacher to find a better policy. We assume that the student has access to task rewards in addition to a teacher. This reward function can guide the exploratory process by determining when deviating from the teacher is fruitful. Following prior work~\citep{czarnecki2019distilling, nguyen2022leveraging, agarwal2022reincarnating}, we consider the scenario of the student learning from a combination of reinforcement (Equation~\ref{eq:env_obj}) and teacher-student (Equation~\ref{eq:teacher_obj}) learning objectives:
\begin{equation}
\label{eq:ea_obj}
\max_\pi {J_{R+I}(\pi, \alpha)}=\max_\pi {\mathbb E \left[ \sum_{t=0}^H {\gamma^t(r_t-\alpha H^X_t(\pi|\bar{\pi}))}\right]}
\end{equation}

\normalsize
where $\alpha$ is the  balancing coefficient between the RL and imitation learning objectives. The joint objective can also be expressed as: 
$J_{R+I}(\pi, \alpha)=J_{R}(\pi)+\alpha J_{I}(\pi)$. Here, $J_{I}(\pi)$, can also be interpreted as a form of reward shaping~\cite{ng1999policy}, where the agent is negatively rewarded for taking actions that differ from the teacher's action.

As the balancing coefficient between the task reward and the teacher guidance, the value of $\alpha$ greatly impacts the algorithm's performance. A low $\alpha$ limits the guidance the student gets from the teacher, resulting in the usual challenges of learning solely from rewards. A high value of $\alpha$ can lead to excessive reliance on a sub-optimal teacher resulting in sub-optimal performance. 
Without a principled way to choose $\alpha$, a common practice is to find the best value of $\alpha$ by conducting a separate and extensive hyperparameter search for every task~\citep{schmitt2018kickstarting, nguyen2022leveraging}. 
Besides the inefficiency of such search, as the agent progresses on a task, the amount of guidance it needs from the teacher can vary. Therefore, a constant $\alpha$ may not be optimal throughout training. Usually, the amount of guidance the student needs diminishes along the training process, but the exact dynamics of this trade-off are task-dependent, and per-task tuning is tedious, undesirable, and often computationally infeasible.

\begin{figure*}[!h]
    \centering
    \includegraphics[width=\linewidth]{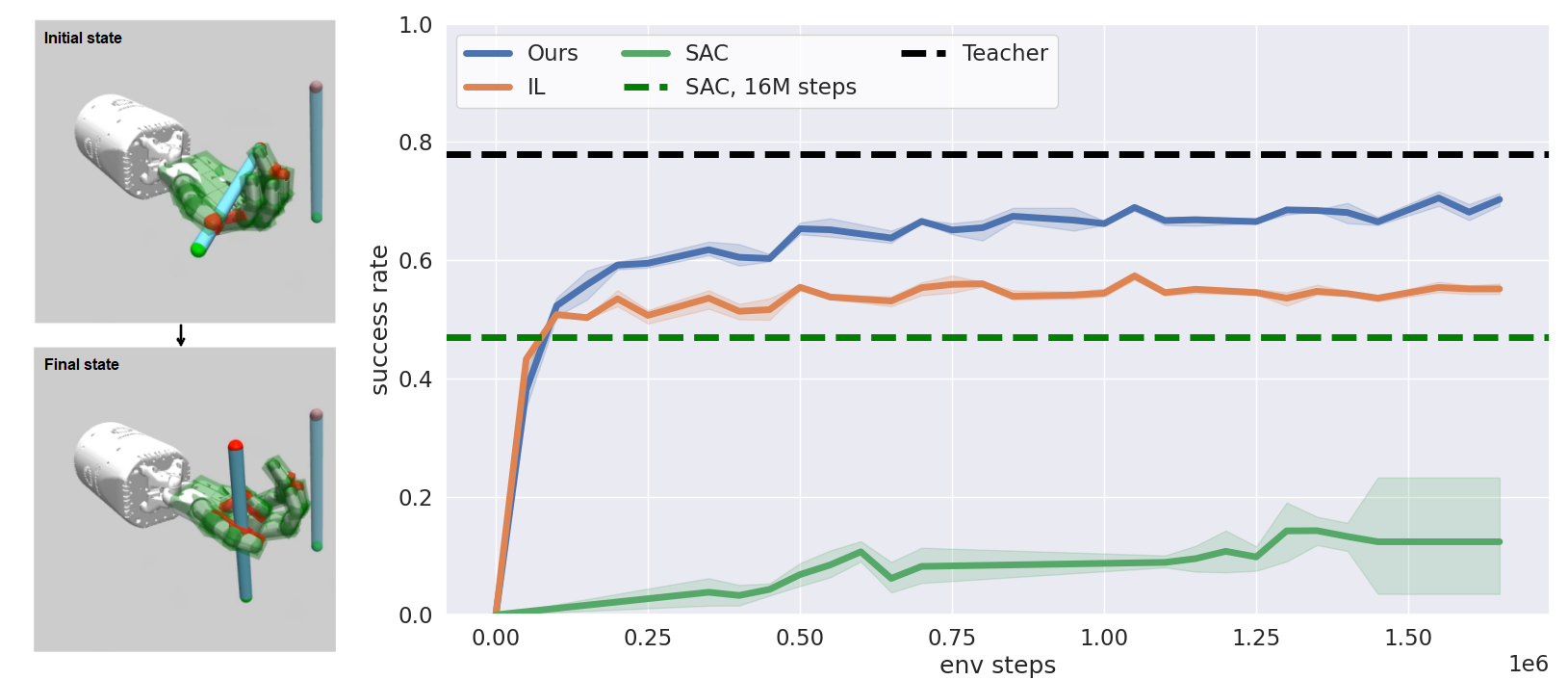}
    \vspace{-20pt}
    \caption{Success rate of a pen reorientation task by Shadow Hand robot, using tactile sensing only. While vanilla reinforcement learning takes a long time to converge, and Teacher-Student methods lead to a major drop in performance compared to the teacher, our algorithm is able to solve the task with reasonable sample efficiency.}
    \vspace{-10pt}
    \label{fig:hand_manipulation}
\end{figure*}

\subsection{Teacher Guided Reinforcement Learning (TGRL)}
Our notion of an optimal policy is one that achieves maximum cumulative task reward, and reinforcement learning optimizes this objective directly. Therefore, the teacher's supervision should only be used when it helps achieve better performance than just using task rewards. This idea is implemented by adding the following constraint: the performance of the policy learning from both rewards and teacher (i.e., the \textit{main} policy) must be at par or outperform a policy trained using only task rewards (i.e., the \textit{auxiliary} policy). Hence, our optimization problem becomes: 
\begin{equation}
\label{eq:const_prob}
\max _{\pi}{J_{R+I}(\pi, \alpha)} \quad \textrm{s.t.} \quad J_R(\pi) \geq J_R(\piaux)
\end{equation}
where $\piaux$ is the \textit{auxiliary} policy trained only using task reward (Eq.~\ref{eq:env_obj}). Overall, our algorithm iterates between improving the auxiliary policy  by solving $\max _{\piaux} {J_R(\piaux)}$ and solving the constrained problem in Equation~\ref{eq:const_prob}. A recent paper \citep{chen2022redeeming} used a similar constraint in another context, to balance between exploration and exploitation in conventional RL. More formally, for $i=1,2,…$ we iterate between two stages:

\begin{enumerate}
\item Partially solving $\piaux^i=\argmax _{\piaux} {J_R(\piaux)}$ to get an updated estimate $J_R(\piaux^i)$.
\item Solving the $i^{th}$ optimization problem:
\begin{equation}
\label{eq:primal_prob}
\max _{\pi}{J_{R+I}(\pi, \alpha)} \,\, \textrm{subject to} \,\,\, J_R(\pi) \geq J_R(\piaux^i)
\end{equation}
\end{enumerate}

The constrained optimization problem in Equation~\ref{eq:primal_prob} is solved using the dual lagrangian method, which has worked well in the reinforcement learning~\citep{tessler2018reward, bhatnagar2012online}. Using the Lagrange duality, we transform the constrained problem into an unconstrained min-max optimization problem. The dual problem corresponding to the primal problem in Equation~\ref{eq:primal_prob} is:
$$
\min_{\lambda \geq 0}{\max_{\pi}}\left[ J_{R+I}(\pi, \alpha) +\lambda \left( J_R(\pi) - J_R(\piaux) \right) \right] =
$$
\begin{equation}
\label{eq:dual_prob}
\min_{\lambda \geq 0}{\max_{\pi}} \left[ (1+\lambda) J_{R+I}(\pi, \frac{\alpha}{1+\lambda})  - \lambda J_R(\piaux) \right]
\end{equation}

Where $\lambda$ is the Lagrange multiplier. Full derivation can be found in appendix~\ref{sec:proofs}. The resulting unconstrained optimization problem is comprised of two optimization problems. The first optimization problem (i.e., the inner loop) solves for $\pi$. Since $J_R(\piaux)$ is independent of $\pi$, this optimization is akin to solving the combined objective of Equation~\ref{eq:ea_obj} but with the importance of the imitation learning reward set to $\frac{\alpha}{1+\lambda}$. Further, for $\lambda\geq0$, we have $\alpha \geq \frac{\alpha}{1+\lambda} \geq 0$, which means that $\alpha$ is the upper bound on the importance of imitation rewards. We also refer to the importance of imitation rewards,$\frac{\alpha}{1+\lambda}$, as the \textit{balancing coefficient}. 

The second stage involves solving for $\lambda$. The dual function is always convex since it is the point-wise minimum of a linear function in $\lambda$ \citep{boyd2004convex}. Therefore it can be solved with gradient descent without worrying about local minimas. The gradient of Equation~\ref{eq:dual_prob} with respect to the lagrange multiplier, $\lambda$, leads to the following update rule:
\begin{equation}
\label{eq:update_lambda}
\lambda_{new} = \lambda_{old} - \mu [J_{R}(\pi)  - J_R(\piaux)]
\end{equation}
Where $\mu$ is the step size for updating $\lambda$. See appendix~\ref{sec:proofs} for full derivation. This update rule is intuitive: If the policy using the teacher ($\pi$) achieves more task reward than the auxiliary policy ($\piaux$) trained without the teacher, then $\lambda$ is decreased, which in turn increases$\frac{\alpha}{1+\lambda}$, making the optimization of $\pi$ more reliant on the teacher in the next iteration. Otherwise, if $\piaux$ achieves a higher reward than $\pi$, then increase in $\lambda$ decreases the importance of the teacher.

When utilizing Lagrange duality to solve a constrained optimization problem, it is necessary to consider the duality gap which is the difference between the optimal dual and primal values. A non-zero duality gap implies that the solution of the dual problem is only a lower bound to the primal problem and does not necessarily provide the exact solution~\cite{boyd2004convex}. Under certain assumptions listed in proposition~\ref{pro:dual_gap}, we show that for our optimization problem, there is \textit{no duality gap} (proof in Appendix~\ref{sec:proofs}). Thus, solving the dual problem also solves the primal problem.
\begin{proposition}
\label{pro:dual_gap}
Denote $\eta_i=J_R(\piaux^i)$. Suppose that the rewards function $r(s,a)$ and the cross-entropy term $H^X(\pi|\bar{\pi})$ are bounded. Then for every $\eta_i\in\mathbb R$ the primal and dual problems described in Eq.~\ref{eq:primal_prob} and Eq.~\ref{eq:dual_prob} have no duality gap. Moreover, if the series $\{\eta_i\}_{i=1}^\infty$ converges, then there is no duality gap in the limit.
\end{proposition}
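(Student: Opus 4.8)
The plan is to recast the constrained program of Eq.~\ref{eq:primal_prob} as a linear program over the convex set of achievable occupancy measures, and then invoke the standard fact that for such a program the optimal dual value equals the closure of the primal perturbation function. First I would introduce, for each history-dependent policy $\pi$, its discounted occupancy measure $\rho^\pi$ over the tuples $(s_t,\tau_t,a_t)$, so that $J_R(\pi)=\langle\rho^\pi, r\rangle$ and, crucially, $J_I(\pi)=\langle\rho^\pi, c_I\rangle$ with $c_I(s,\tau,a)=\log\bar\pi(a\mid o^T(s))$: the cross-entropy $H^X_t(\pi\mid\bar\pi)=-\sum_a \pi(a\mid\tau_t)\log\bar\pi(a\mid o^T_t)$ is \emph{linear} in the policy's per-history action distribution, hence $J_I$ is a linear functional of $\rho^\pi$, and it is bounded by the hypothesis on $H^X$; likewise $J_R$ is bounded and linear. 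Thus $J_{R+I}(\pi,\alpha)=\langle\rho^\pi, r+\alpha c_I\rangle$ is a bounded linear functional. The set $\mathcal K=\{\rho^\pi : \pi\}$ is convex (a convex combination of occupancy measures is realized by the corresponding mixture policy, or equivalently is a feasible point of the Bellman-flow constraints), and, using $\gamma<1$ together with the boundedness assumptions, it is closed and compact in the appropriate weak-$*$ topology. So Eq.~\ref{eq:primal_prob} becomes $\max_{\rho\in\mathcal K}\langle\rho, r+\alpha c_I\rangle$ subject to $\langle\rho, r\rangle\ge\eta_i$, a concave (indeed linear) maximization over a convex compact set with a single linear inequality constraint.

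Next I would study the primal value $p(\eta):=\sup\{J_{R+I}(\pi,\alpha) : J_R(\pi)\ge\eta\}$. Concavity of $p$ on $\mathbb R$ follows from the mixture argument: given $\eta_1,\eta_2$ and near-optimal feasible $\rho_1,\rho_2$, the point $\rho_t=t\rho_1+(1-t)\rho_2\in\mathcal K$ satisfies $\langle\rho_t, r\rangle\ge t\eta_1+(1-t)\eta_2$ and $\langle\rho_t, r+\alpha c_I\rangle\approx t\,p(\eta_1)+(1-t)\,p(\eta_2)$, hence $p(t\eta_1+(1-t)\eta_2)\ge t\,p(\eta_1)+(1-t)\,p(\eta_2)$. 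Its effective domain is $(-\infty,\bar J_R]$ with $\bar J_R:=\sup_\pi J_R(\pi)<\infty$ (attained, by compactness of $\mathcal K$), and $p$ is proper since $p\le\sup_\pi J_{R+I}(\pi,\alpha)<\infty$. Then I would invoke the standard Lagrangian-duality identity (e.g.\ \citep{boyd2004convex}): the optimal dual value of Eq.~\ref{eq:dual_prob} equals $(\mathrm{cl}\,p)(\eta_i)$, the upper semicontinuous concave hull of $p$ evaluated at $\eta_i$. For a proper concave function on $\mathbb R$, $\mathrm{cl}\,p$ agrees with $p$ on the interior of its domain; on $(\bar J_R,\infty)$ both values are $-\infty$ (for the dual, $\sup_\rho\langle\rho, r+\alpha c_I+\lambda r\rangle-\lambda\eta_i\le \mathrm{const}+\lambda(\bar J_R-\eta_i)\to-\infty$ as $\lambda\to\infty$); and at the single boundary point $\eta_i=\bar J_R$, upper semicontinuity of $p$ — which again follows from compactness of $\mathcal K$ — yields $(\mathrm{cl}\,p)(\bar J_R)=p(\bar J_R)$. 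Hence $p(\eta_i)$ equals the dual value for every $\eta_i\in\mathbb R$, which is exactly the no-duality-gap claim. (Two alternative routes worth noting: Sion's minimax theorem applied to the bilinear $L(\rho,\lambda)$ on the convex compact set $\mathcal K$ times $[0,\infty)$, or Slater's condition, which applies directly whenever $\eta_i<\bar J_R$.)

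For the limiting statement, suppose $\{\eta_i\}$ converges to $\eta_\infty$. Since $p$ is concave, hence continuous on the interior of its domain, and upper semicontinuous everywhere, $p(\eta_i)\to p(\eta_\infty)$, and by the same convergence of $(\mathrm{cl}\,p)(\eta_i)$ the corresponding dual values converge as well; because each individual gap vanishes by the previous step, the gap of the limiting problem ($\eta=\eta_\infty$) is also zero.

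\textbf{Main obstacle.} The delicate part is not convexity — that is routine — but the \emph{topological regularity} of $\mathcal K$ in the partially observed, infinite-horizon, history-dependent setting: closedness and compactness of $\mathcal K$ (equivalently, attainment of $\bar J_R$ together with upper semicontinuity of $p$) is precisely what is needed to cover the boundary case $\eta_i=\bar J_R$ and, in particular, the convergence statement when $J_R(\piaux^i)\uparrow\bar J_R$. If one prefers not to assume this regularity, the clean fallback is Slater's condition, which already gives a zero gap for all $\eta_i$ strictly below $\bar J_R$, leaving only that single boundary value. A secondary technical point is that making $J_I$ a genuinely linear functional requires care in handling the dependence of $\bar\pi$ on $o^T$ versus the student's history $\tau$, which is why one works with the joint occupancy over $(s_t,\tau_t,a_t)$ rather than with state-action occupancy alone.
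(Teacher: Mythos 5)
Your argument is correct in outline, but it takes a genuinely different route from the paper. The paper imports Theorem~1 of \citet{paternain2019constrained} as a black box: it verifies boundedness and a Slater condition (via an extra assumption that $\eta_i \le J_R(\pi^*)-\epsilon$, enforced by shifting the constraint by a small $\epsilon$) and concludes zero gap; for the limit it then runs a separate argument showing the perturbation functions $P_i(\xi)$ converge pointwise to a concave limit and invokes Fenchel--Moreau. You instead re-derive the key structural fact underlying that cited theorem — concavity of the perturbation function $p(\eta)$ — directly from the linear-programming formulation over occupancy measures, and then apply the standard identity that the dual value equals $(\mathrm{cl}\,p)(\eta_i)$. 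This buys you two things the paper does not get: you cover \emph{every} $\eta_i\in\mathbb{R}$ without a Slater-type margin assumption (handling the infeasible regime $\eta_i>\bar J_R$ and the boundary point $\eta_i=\bar J_R$ explicitly), and the ``no gap in the limit'' claim becomes an immediate corollary of the first part applied to $\eta_\infty$, rather than requiring the paper's pointwise-convergence-of-perturbation-functions machinery. The price is concentrated exactly where you flag it: the closedness/compactness of the set $\mathcal{K}$ of history-dependent occupancy measures (needed for attainment of $\bar J_R$ and upper semicontinuity of $p$ at the boundary) is asserted rather than proven, and in the infinite-horizon partially observed setting this is a real technical commitment — it is essentially the content that \citet{paternain2019constrained} supply and that the paper sidesteps by assuming the $\epsilon$-margin. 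If you either (a) supply that compactness argument under explicit regularity conditions on $\mathcal{S}$, $\mathcal{A}$, and the kernels, or (b) retreat to your stated Slater fallback for $\eta_i<\bar J_R$ (which matches the paper's actual assumption A.3), the proof is complete; also note that convexity of $\mathcal{K}$ for history-dependent policies deserves one more line (the mixture policy must be realized without access to the mixing coin, which works at the level of occupancy measures because the belief $b(s\mid\tau)$ is policy-independent).
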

Notice that in the general case, the cross-entropy term can reach infinity when the support of the policies does not completely overlap, violating the assumption of $H^X(\pi|\bar{\pi})$ being bounded. As a remedy, we clip the value of the cross-entropy term in our implementation of TGRL.

\begin{figure}
\begin{minipage}[t]{\linewidth}
\begin{algorithm}[H]
\begin{algorithmic}[1]
    \STATE \textbf{Input}: $\lambda_{init}$, $\alpha$, $N_{\text{collect}}$, $N_{\text{update}}$, $\mu$
    \STATE Initialize policies $\pi$ and $\pi_R$, $\lambda_0 \leftarrow \lambda_{init}$
    \FOR {$i = 1 \cdots $}
    \STATE Collect $N_{\text{collect}}$ new trajectories and add them to the replay buffer.
    \FOR {$j = 1 \cdots N_{\text{update}}$}
        \STATE Sample a batch of trajectories from the replay buffer.
        \STATE Update $Q_R$ and $Q_I$.
        \STATE Update $\pi_R$ by maximizing $Q_R$
        \STATE Update $\pi$ by maximizing $Q_R+\frac{\alpha}{1+\lambda}Q_I$
    \ENDFOR{}
    \STATE Estimate $J_R(\pi)-J_R(\piaux)$ using Eq.~\ref{eq:obj_diff_aprx}
    \STATE $\lambda_i \leftarrow \lambda_{i-1} + \mu[J_R(\pi)-J_R(\piaux)]$
    \ENDFOR{}
    \STATE \textbf{Output}: $\pi$
\end{algorithmic}
    \caption{Teacher Guided Reinforcement Learning (TGRL)} 
    
    \label{alg:alg}
\end{algorithm}
\end{minipage}
\vspace{-15pt}
\end{figure}

\subsection{Implementation}

\textbf{Off-policy approach}: We implemented our algorithm using an off-policy actor-critic approach. Off-policy learning allows data collected by both policies, $\pi$ and $\piaux$, to be stored in a common replay buffer used for training both policies. 
Our objective is to maximize the joint Q-value: $Q_{R+I}=Q_R+\frac{\alpha}{1+\lambda}Q_I$, where $Q_R, Q_I$ denote the Q-value of actions with respect to the task~(Equation~\ref{eq:env_obj}) and imitation (Equation~\ref{eq:teacher_obj}) rewards respectively. Instead of directly learning, $Q_{R+I}$, we train two critic networks, $Q_R$ and $Q_I$, and combine their values to estimate $Q_{R+I}$. This choice enables us to estimate $Q_{R+I}$ for different values of $\lambda$ without any need for re-training the critics.
We also represent $\pi$ and $\piaux$ with separate actor networks optimized to maximize the corresponding Q-values. In the data collection step, half of the trajectories are collected using $\pi$ and the other half using $\pi_R$. See Algorithm~\ref{alg:alg} for an outline of our method and Appendix~\ref{sec:exp_details} for further details.

\textbf{Estimating the performance difference}: 
As shown in Equation~\ref{eq:update_lambda}, the gradient of the dual problem with respect to $\lambda$ is the performance difference between the two policies, $J_{R}(\pi)  - J_{R}(\piaux)$.
To estimate the performance difference, one option is to perform Monte-Carlo estimation -- i.e., rollout trajectories using both policies and determine the empirical estimate of cumulative rewards. However, a good estimate of cumulative rewards requires collecting a large number of trajectories which would make our method data inefficient. Another data-efficient option is to reuse data in the replay buffer for estimating the performance difference. Because the data in the replay buffer was not collected using the current policies, we make the off-policy correction using approximations obtained by extending prior results from \citep{kakade2002approximately, schulman2015trust} known as the \textit{objective difference lemma} to the off-policy case: 
\begin{proposition} \label{pro:performance_diff}
Let $\rho(s,a,t)$ be the distribution of states, actions, and timesteps currently in the replay buffer. Then the following is an unbiased approximation of the performance difference:
$$J_{R}(\pi)  - J_{R}(\piaux)=$$
\begin{equation}
\label{eq:obj_diff_aprx}
\mathbb E_{(s,a,t)\sim \rho}{[\gamma^t(A_{\piaux}(s,a)-A_{\pi}(s,a))]}
\end{equation}
\end{proposition}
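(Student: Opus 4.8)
The plan is to obtain Equation~\ref{eq:obj_diff_aprx} from the classical objective/performance difference lemma of \citet{kakade2002approximately, schulman2015trust} by a symmetrization argument that removes the need for importance weights. Recall that lemma in discounted-sum form: for any two policies $\pi$ and $\piaux$,
\begin{equation*}
J_R(\pi)-J_R(\piaux)=\mathbb{E}_{\tau\sim\pi}\Big[\sum_{t\ge0}\gamma^t A_{\piaux}(s_t,a_t)\Big],
\end{equation*}
and, applying the same identity with the two policies exchanged,
\begin{equation*}
J_R(\pi)-J_R(\piaux)=-\,\mathbb{E}_{\tau\sim\piaux}\Big[\sum_{t\ge0}\gamma^t A_{\pi}(s_t,a_t)\Big].
\end{equation*}
Averaging these two expressions writes the performance gap as a half-and-half mixture of one term rolled out under $\pi$ and one rolled out under $\piaux$, which already matches the composition of the replay buffer, since the data-collection rule feeds half the trajectories from each policy into it. A naive estimate of a single one of these expectations from the buffer would instead require the importance ratio $d^\pi/\rho$; the content of the proof is that this can be avoided.

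The second step makes the two integrands literally identical. The key fact is that an advantage function is zero-mean under its own policy, $\mathbb{E}_{a\sim\pi(\cdot\mid s)}[A_\pi(s,a)]=0$ and $\mathbb{E}_{a\sim\piaux(\cdot\mid s)}[A_{\piaux}(s,a)]=0$ for every $s$, hence $\mathbb{E}_{\tau\sim\pi}[\sum_t\gamma^t A_\pi(s_t,a_t)]=0$ and symmetrically $\mathbb{E}_{\tau\sim\piaux}[\sum_t\gamma^t A_{\piaux}(s_t,a_t)]=0$. Adding these vanishing quantities, the first expectation above becomes $\mathbb{E}_{\tau\sim\pi}[\sum_t\gamma^t(A_{\piaux}-A_\pi)(s_t,a_t)]$ and the second becomes $\mathbb{E}_{\tau\sim\piaux}[\sum_t\gamma^t(A_{\piaux}-A_\pi)(s_t,a_t)]$. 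Both halves now integrate the same function $\gamma^t\big(A_{\piaux}(s,a)-A_\pi(s,a)\big)$, so their average equals the expectation of that function over the mixture distribution of $(s,a,t)$ generated by running $\pi$ and $\piaux$ in equal proportion, i.e.\ over $\rho$; collapsing the per-episode sum into a single draw $(s,a,t)\sim\rho$, with $t$ the transition's own timestep, gives the stated formula. Note that subtracting $A_\pi$ also makes the estimate invariant to the precise ratio of $\pi$- to $\piaux$-trajectories in the buffer, so the ``half-and-half'' rule is convenient but not essential to the identity.

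For a fully rigorous write-up I would (i) invoke the boundedness of $r$ and of $H^X$, and hence of the advantages, together with $\gamma<1$ to justify by dominated convergence both the convergence of the series and the interchange of sum and expectation --- the same boundedness already assumed in Proposition~\ref{pro:dual_gap}; and (ii) be explicit about the normalization of $\rho$: when $\rho$ is taken to be the empirical distribution over transitions, the identity holds up to a fixed positive constant (a reciprocal-horizon / discounted-occupancy normalizer) relative to the sum form, and this constant is harmless because it is absorbed into the step size $\mu$ in the $\lambda$-update of Eq.~\ref{eq:update_lambda}.

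The main obstacle, and the reason the statement is an \emph{approximation} rather than an exact equality in practice, is that the cancellations above hold only when $\rho$ is precisely the mixture of the \emph{current} occupancies of $\pi$ and $\piaux$, whereas the replay buffer also retains stale trajectories generated by earlier iterates, for which $\mathbb{E}_{a\sim\pi_{\mathrm{old}}(\cdot\mid s)}[A_\pi(s,a)]\neq0$. I would therefore state the result as exact in the idealized on-policy-buffer regime, which is the sense in which it is ``unbiased'', and, if a quantitative version is wanted, bound the residual by a discrepancy term between $\rho$ and that idealized mixture (for instance a total-variation or aggregate per-state policy-shift term), which is small when buffer turnover is fast relative to how quickly $\pi$ and $\piaux$ change.
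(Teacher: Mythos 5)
Your derivation is correct and arrives at the paper's integrand, but by a genuinely different route. You symmetrize the performance difference lemma between $\pi$ and $\piaux$, average the two directions, and use the zero-mean property of each policy's own advantage under its own trajectories to make both halves integrate $\gamma^t(A_{\piaux}-A_\pi)$; this establishes the identity when the buffer is exactly a mixture of trajectories from the two \emph{current} policies. The paper instead introduces the buffer's behavioral policy $\pi_{RB}$ as an intermediate and telescopes, $J_R(\pi)-J_R(\piaux)=-\bigl[J_R(\pi_{RB})-J_R(\pi)\bigr]+\bigl[J_R(\pi_{RB})-J_R(\piaux)\bigr]$, applying the lemma twice with $\pi_{RB}$ as the roll-out policy so that both expectations are taken under the buffer distribution and the $J_R(\pi_{RB})$ terms cancel. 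That cancellation is what your approach misses, and it matters for your closing paragraph: the stale-data bias you describe is not actually present. For trajectories from any old iterate $\pi_{\mathrm{old}}$, the lemma gives $\mathbb{E}_{\tau\sim\pi_{\mathrm{old}}}[\sum_t\gamma^t A_{\piaux}(s_t,a_t)]=J_R(\pi_{\mathrm{old}})-J_R(\piaux)$ and $\mathbb{E}_{\tau\sim\pi_{\mathrm{old}}}[\sum_t\gamma^t A_{\pi}(s_t,a_t)]=J_R(\pi_{\mathrm{old}})-J_R(\pi)$, whose difference is exactly $J_R(\pi)-J_R(\piaux)$ with the dependence on the data-collecting policy cancelled; hence the estimator is unbiased for any mixture of occupancy measures in the buffer, which is precisely the ``off-policy correction'' the proposition is meant to deliver. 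Your remark that the estimate is invariant to the $\pi$/$\piaux$ mixing ratio is the two-policy special case of this general cancellation. The genuine sources of approximation are finite-sample estimation, learned rather than exact advantages, and the normalization constant you correctly flag in point (ii) --- not buffer staleness.
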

Another challenge in estimating the gradient of $\lambda$ (i.e., the performance difference between the student and the teacher policies) is the variability in the scale of the policies' performance across different environments and during training. This makes it difficult to determine an appropriate learning rate for the weighting factor $\lambda$, which will work effectively in all settings. To address this issue, we found it necessary to normalize the performance difference value during the training process. This normalization allows us to use a fixed learning rate across all of our experiments.

\section{Experiments}
\label{sec:exp}
We perform four sets of experiments. In Sec.~\ref{ssec:num1}, we provide a comparison to previous work in cases where the teacher is too good to mimic. In Sec.~\ref{ssec:num3} we solve an object reorientation problem with tactile sensors, a difficult partial observable task that both RL and TSL struggle to solve. In Sec.~\ref{ssec:num2} we look into the ability of the TGRL agent to surpass the teacher's performance. Finally, in Sec.~\ref{ssec:num4} we do ablations of our own method to show the contribution of individual components. 

\begin{figure*}[!h]
    \centering
    \includegraphics[width=\linewidth]{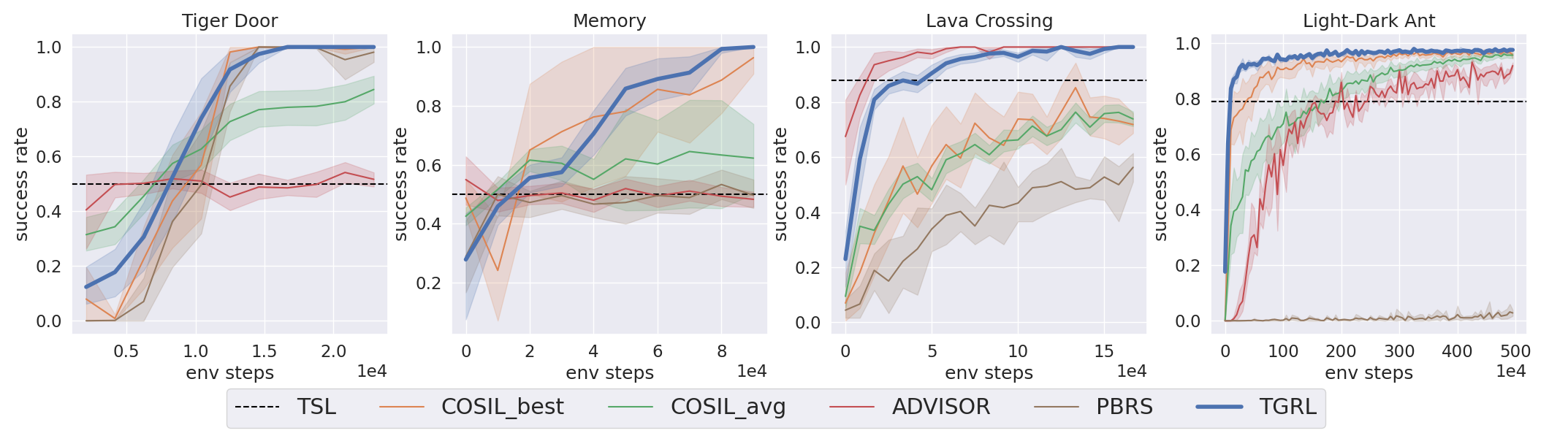}
    \vspace{-20pt}
    \caption{Comparing TGRL (blue) against algorithms proposed in prior work. TGRL is the only algorithm that performs consistently well across all environments.}
    \vspace{-10pt}
    \label{fig:main_results}
\end{figure*}

\subsection{TGRL performs well, without a need for hyperparameter tuning}  \label{ssec:num1}
We provide empirical evidence (1) showcasing the robustness of TGRL to choice of hyperparameters controlling the update of the balancing coefficient and (2) comparison to prior work.  
We compare TGRL to the following:

\textbf{TSL}. A pure Teacher-Student Learning approach that optimizes only Equation~\ref{eq:teacher_obj}.

\textbf{COSIL}~\citep{nguyen2022leveraging}. This algorithm also uses entropy-augmented RL (Eq.~\ref{eq:ea_obj}) to combine the task reward and the teacher's guidance. To adjust the balancing coefficient $\alpha$, they propose an update rule for maintaining a fixed distance ($\bar{D}$) between the student's and teacher's policies by minimizing
$\alpha(J_I(\pi)-\bar{D})$ using gradient descent. Choosing the right value of $\bar{D}$ is a challenge since its unknown apriori how similar the student and the teacher should be. Moreover, $\bar{D}$ can change drastically between environments, depending on the action space support. To tackle this issue, we run a hyperparameter search with $N=8$ different values of $\bar{D}$ and report performance for the best hyperparameter per task ($COSIL_{best}$) and average performance across hyperparameters ($COSIL_{avg}$).

\textbf{ADVISOR-off}. An off-policy version of the algorithm from \citep{weihs2021bridging} that uses a state-dependent balancing coefficient. First, an imitation policy is trained using only teacher-student learning loss. Then, for every state, the action distribution of the teacher policy is compared against the imitation policy. The states in which the two policies disagree are deemed to be ones where there is an information gap. For such states, the teacher is trusted less and more importance is given to the task reward.  

\textbf{PBRS}~\citep{walsman2023impossibly}. A potential-based reward shaping (PBRS) method based on~\citep{ng1999policy} to mitigate issues with Teacher-Student Learning. PBRS uses a given value function $V(s)$ to assign higher rewards to more beneficial states, which can lead the agent to trajectories favored by the policy associated with that value function: 
\begin{equation}
\label{eq:PBRS}
r_{new}=r_{task}+\gamma V(s_{t+1})-V(s_t)
\end{equation}
where $r_{task}$ is the task reward. Since their algorithm is on-policy, for fair comparison, we created an off-policy version of this method. For this, first, we train an imitation policy by minimizing only the teacher-student learning loss (Eq.~\ref{eq:teacher_obj}). Then, we train a neural network to represent the value function of this imitation policy. Using this value function, we obtain an augmented rewards function described in Equation~\ref{eq:PBRS}, which is then used to train a policy using the SAC algorithm~\cite{haarnoja2018soft}.

\textbf{Experimental Domains}. We experiment on diverse problems taken from prior works studying issues in Teacher-Student Learning spanning discrete and continuous action spaces, and both proprioceptive and pixel observations. For a description of each environment, see appendix~\ref{sec:exp_details}. For a fair comparison, we used the same code and Q-learning hyperparameters for all algorithms, tuning only the hyperparameters involved in balancing the teacher supervision against the task reward. The Q-learning hyperparameters correspond to hyperparameters of the RL algorithm chosen from the best-performing SAC agent. For TGRL we only used a single value for the initial coefficient $\lambda_{init}$ and coefficient learning rate $\mu$ for all tasks (more details in Appendix~\ref{sec:exp_details}). 

\begin{figure*}[t!]
    \centering
    \includegraphics[width=\linewidth]{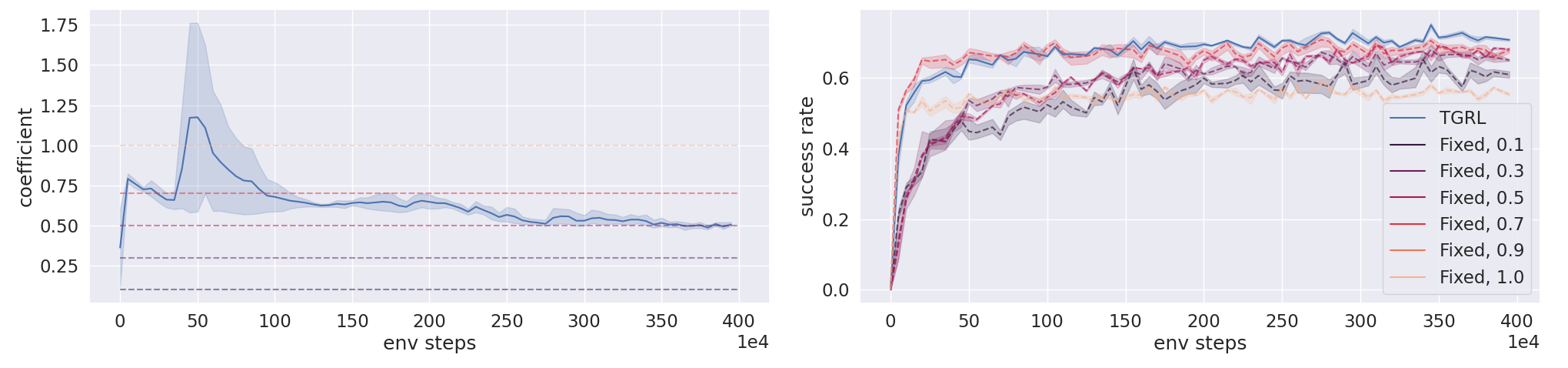}
    \vspace{-20pt}
    \caption{Adaptively balancing teacher guidance and rewards results in better asymptotic performance compared to fixing the balancing coefficient ($\lambda$). Experiment on the \textit{Shadow Hand} environment. (Left) Dynamics of $\lambda$ during training: At the start of training the agent relies more on the teacher and gradually the coefficient decreases, indicating more reliance on rewards. (Right) Performance of TGRL (blue) and ablated versions of TGRL using a fixed balancing coefficient ($\lambda$).}
    \vspace{-10pt}
    \label{fig:robustness_ablation2}
\end{figure*}

\textbf{Comparison to Baselines}. 
Results in Figure~\ref{fig:main_results} show that while each baseline method succeeds in some sub-set of tasks, no baseline method is effective on all tasks. In contrast, TGRL, solves all tasks successfully with data efficiency comparable to the best baseline in each task. Most importantly, TGRL requires no task-specific hyperparameter tuning.
Notice that $COSIL$ demonstrates comparable performance on three out of four tasks when its hyperparameters are carefully tuned (i.e., $COSIL_{best}$). However, the average performance across all hyperparameters (i.e., $COSIL_{avg}$) is significantly lower. This highlights sensitivity of $COSIL$ to the choice of hyperparameters. While \textit{PBRS} does not require hyperparameter tuning, consistent with results from another work~\citep{cheng2021heuristic}, it converges slower than other teacher-student methods and doesn't consistently perform well across tasks.

\textit{ADVISOR} achieves good performance on \textit{Lava Crossing} and \textit{Light-Dark Ant} tasks but converged to a sub-optimal policy achieving comparable performance to Teacher-Student Learning (TSL) on \textit{Tiger Door} and \textit{Memory} environments. The sub-optimal performance is due to a fundamental limitation of the \textit{ADVISOR} algorithm. As a reminder, \textit{ADVISOR} works by identifying states where the student has insufficient information to follow the teacher. For such states, instead of imitating the teacher, task rewards are used to decide the action. In the \textit{Tiger Door} environment (see Figure~\ref{fig:tiger_door}), the student has sufficient information to follow the teacher until the state at which the two arms of the environment split. However, this is too late for the student should deviate from the teacher -- to achieve optimal performance, the student should have deviated from the teacher earlier to goto the pink button. 
This example illustrates a problem that \textit{ADVISOR} encounters in environments where the information-gathering actions deviating from the teacher need to be performed before encountering the state at which the student cannot imitate the teacher.

\textbf{Robustness to Hyperparameters}.
To demonstrate the robustness of the choice of $\lambda_{init}$, we experimented with different values on \textit{Lava Crossing} environment. The results in Figure~\ref{fig:robustness_ablation} (left) shows that irrespective of the choice of $\lambda_{init}$, TGRL achieves the same asymptotic performance. This indicates that TGRL can effectively adjust $\lambda$, regardless of its initial value, $\lambda_{init}$. 

\subsection{TGRL can solve difficult environments with significant partial observability.} \label{ssec:num3}
To investigate the performance of our method on a more practical task with severe partial observability, we experimented with the Shadow hand test on task of re-orienting a pen to a target pose using only touch sensors and proprioceptive information \citep{melnik2021using}. 
Consider a Teacher-Student setup where the teacher policy observes the pen's pose and velocity. The student, however, only has access to an array of tactile sensors located on the palm of the hand and the phalanges of the fingers. To solve the task, the student needs to move his fingers along the pen and use the reading of these sensors to infer its pose. The teacher does not need to take these information-gathering actions. Thus, just mimicking the teacher will result in a sub-optimal student.

To train all agents, the reward was set to the negative of the distance between the current pen's pose and the goal. The pen has rotational symmetry around the $z$ axis, so the distance was computed only over rotations around the x and y axes. A trajectory was considered successful if the pen reached the goal orientation within 0.1 radians of the goal pose. The performance is averaged over 1,000 randomly sampled initial and goal poses.

The results are in Figure~\ref{fig:hand_manipulation}. First, to assess the difficulty of the task, we report the results of an RL agent trained with Soft Actor-Critic and Hindsight Experience Replay (HER) \citep{andrychowicz2017hindsight} over the student's observation space. This RL agent has achieved a 47\% success rate, demonstrating the difficulty of learning this task using RL alone. The teacher, trained also using SAC and HER but on the full state space, achieved a 78\% success rate. Performing vanilla Teacher-Student learning using this teacher resulted in an agent with a 54\% success rate. This performance gap shows that just imitating the teacher is not sufficient, and a deviation from the teacher's action is indeed required to learn a good policy. With TGRL, the agent achieved a significantly higher success rate of 73\%. These results demonstrate the usefulness of our algorithm and its ability to use the teacher's guidance while learning from the reward at the same time. TGRL also outperforms baseline methods (see Appendix ~\ref{sec:more_res} for more details).

\subsection{TGRL can surpass the Teacher's peformance} \label{ssec:num2}
To evaluate the ability of TGRL to surpass a sub-optimal teacher, we conducted experiments in several domains. For the \textit{Tiger Door} and \textit{Lava Crossing} environments, we constructed teachers with different optimality levels ranging from 40\% to 100\% success rate. Results in Table~\ref{table:sub_optimal} show that even with a sub-optimal teacher, TGRL learns the optimal policy in the \textit{Tiger Door} environment.
\textit{Lava Crossing} is a more challenging task, where vanilla SAC achieves 0\% success rate. Therefore, combining learning from task reward and teacher supervision allows TGRL to achieve better performance than the sub-optimal teacher, but still not 100\% success rate.

\begin{table}[h!]
\centering
\vspace{0pt}
\caption{\centering TGRL Student's success rate for sub-optimal teachers. Mean and 95\% CI over 10 random seeds. All agents were trained until convergence.}
\begin{tabular}{|P{55px}|P{44px}|P{44px}|P{44px}|}
\hline
\begin{tabular}[c]{@{}l@{}}Teacher's \\ Success Rate\end{tabular} & 100\% & 80\% & 40\% \\ \hline
\begin{tabular}[c]{@{}l@{}}Student's \\success rate - \\ Tiger Door\end{tabular}    & $100\pm0.0\%$ & $100\pm0.0\%$ & $100\pm0.0\%$ \\ \hline
\begin{tabular}[c]{@{}l@{}}Student's \\success rate - \\ Lava Crossing\end{tabular} & $100\pm0.0\%$ & $97\pm0.8\%$ & $65\pm8.1\%$ \\ \hline
\end{tabular}
\label{table:sub_optimal}
\end{table}

In addition, experimented with a variant of the Shadow Hand environment, where both student and teacher have access to the full state, but the teacher is sub-optimal. The results depicted in Figure~\ref{fig:sub_optimal} show that TGRL converges fast to the teacher's performance but than able to keep improving by utilizing task reward supervision, eventually exceeding the teacher's performance.

\begin{figure}[h!]
    \centering
    \includegraphics[width=\columnwidth]{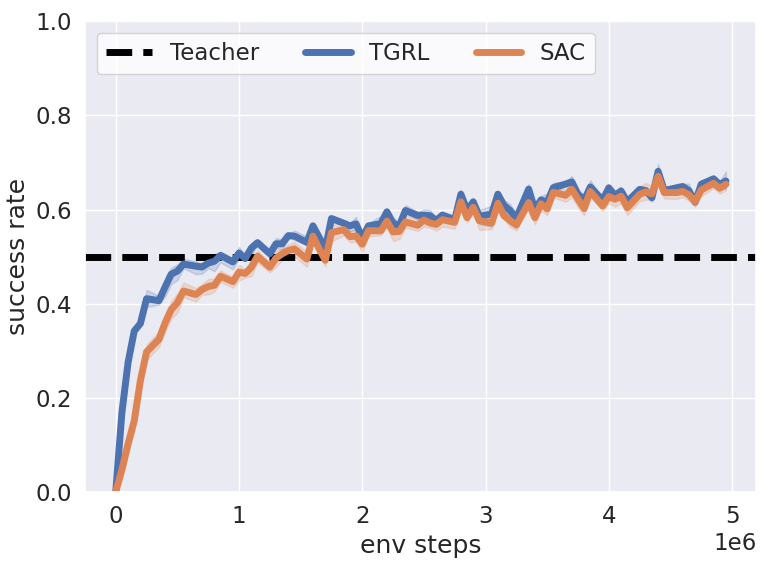}
    \caption{TGRL performance in the \textit{Shadow Hand} environment with a sub-optimal teacher. TGRL is able to surpass the teacher and achieves asymptotic performance similar to that of RL.
    }
    \label{fig:sub_optimal}
\end{figure}

\subsection{Ablations} \label{ssec:num4}
\textbf{Joint versus separate replay buffer}. 
We empirically found that having a joint replay buffer between the two policies, $\pi$ and $\piaux$, is necessary for good performance. In Figure~\ref{fig:robustness_ablation}, we compare the performance of our method with separate and joint replay buffers for the two policies on \textit{Light-Dark Ant} environment. 
As a reminder, the auxiliary policy ($\piaux$) limits the set of feasible policies in Equation~\ref{eq:const_prob}. In tasks where it is hard to learn a good policy using only task rewards, the performance of $\piaux$ will be bad leading to a loose constraint which will be ineffective. Combining the replay buffer allows $\piaux$ to learn from trajectories collected by the main policy ($\pi$), thus enabling it to achieve better performance. This, in turn, leads to a stricter constraint on the main policy, pushing it to achieve better performance.

\textbf{Fixed versus adaptive balancing coefficient}. 
A benefit of TGRL is that the balancing coefficient in the combined objective (Equation~\ref{eq:ea_obj}) dynamically changes during the training process based on the value of $\lambda$. To investigate if an adaptive coefficient is indeed beneficial, we conducted an ablation study wherein we trained policies in the \textit{Shadow Hand} environment with fixed coefficients. Figure~\ref{fig:robustness_ablation2} shows that the balancing coefficient of TGRL changes during training (left plot). At the start of training, the value is high, indicating that the teacher is given high importance. As the agent learns, the value decreases, indicating that learning from rewards is given more importance in the later stages of training. The results in the right plot of Figure~\ref{fig:robustness_ablation2} show that TGRL with a dynamically changing balancing coefficient outperforms ablated versions with fixed coefficients. 
This result indicates that TGRL goes beyond mitigating the need for searching the balancing coefficient -- it also outperforms a fixed balancing coefficient found by rigorous hyperparameter search. 

\begin{figure*}[t!]
    \centering
    \includegraphics[width=\linewidth]{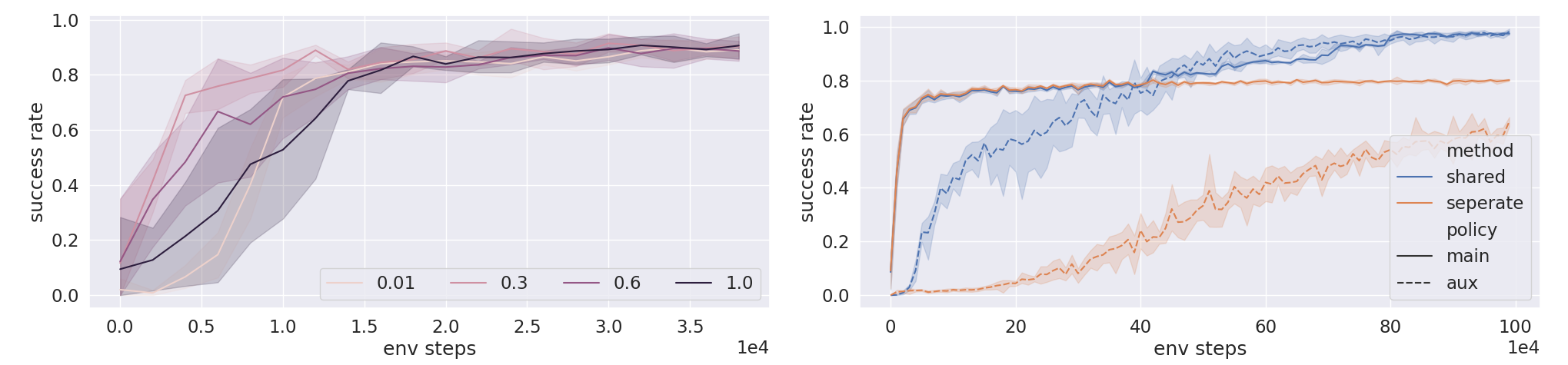}
    \vspace{-20pt}
    \caption{(Left) TGRL performance on \textit{Lava Crossing} for different values of $\lambda_{init}$. (Right) Comparing the effect of separate and joint replay buffers between the main and auxiliary policies, $\pi$ and $\pi_R$, evaluated on the \textit{Light-Dark Ant} environment.}
    \vspace{-5pt}
    \label{fig:robustness_ablation}
\end{figure*}

\section{Discussion}
\label{sec:discussion}
While TGRL improved performance across all tasks, it has its limitations.
If the agent needs to deviate substantially from the teacher, then intermediate policies during learning might have worse performance than the teacher before the agent is able to leverage rewards to improve performance. In such a case, the imitation learning policy is a local minimum, overcoming which may require additional exploration incentives~\cite{pathak2017curiosity}. 
Second, for the constraint in Equation~\ref{eq:const_prob} to be meaningful, $\piaux$ trained only with task rewards should achieve reasonable performance. While having a shared replay buffer with $\pi$ may help $\piaux$ in some hard exploration problems, learning of $\piaux$ can fail which would make the constraint ineffective.

An interesting investigation that we leave to future work is to have a state-dependent balancing coefficient. As the difference between the teacher's and student's actions can be state-dependent, such flexibility can accelerate convergence and lead to better performance. 

\section*{Acknowledgements}
We thank the members of the Improbable AI lab for the helpful discussions and feedback on the paper. We are grateful to MIT Supercloud and the Lincoln Laboratory Supercomputing Center for providing HPC resources. The research
was supported in part by the MIT-IBM Watson AI Lab, Hyundai Motor Company, DARPA Machine Common Sense Program,  and ONR MURI under Grant Number N00014-22-1-2740, and ARO MURI under Grant Number W911NF-21-1-0328.

\section*{Author Contributions}
\textbf{Idan Shenfeld} Identified the current problem with teacher-student algorithms, developed the TGRL algorithm, derived the theoretical results, conducted the experiments, and wrote the paper. 
\textbf{Zhang-Wei Hong} helped in the debugging, implementation and the choice of necessary experiments and ablations.
\textbf{Aviv Tamar} helped derive the theoretical results and provided feedback on the writing. 
\textbf{Pulkit Agrawal} oversaw the project. He was involved in the technical formulation, research
discussions, paper writing, overall advising, and positioning of the work.

\bibliography{main.bib}

\begin{thebibliography}{42}
\providecommand{\natexlab}[1]{#1}
\providecommand{\url}[1]{\texttt{#1}}
\expandafter\ifx\csname urlstyle\endcsname\relax
  \providecommand{\doi}[1]{doi: #1}\else
  \providecommand{\doi}{doi: \begingroup \urlstyle{rm}\Url}\fi

\bibitem[Madani et~al.(1999)Madani, Hanks, and
  Condon]{madani1999undecidability}
Omid Madani, Steve Hanks, and Anne Condon.
\newblock On the undecidability of probabilistic planning and infinite-horizon
  partially observable markov decision problems.
\newblock In \emph{AAAI/IAAI}, pages 541--548, 1999.

\bibitem[Papadimitriou and Tsitsiklis(1987)]{papadimitriou1987complexity}
Christos~H Papadimitriou and John~N Tsitsiklis.
\newblock The complexity of markov decision processes.
\newblock \emph{Mathematics of operations research}, 12\penalty0 (3):\penalty0
  441--450, 1987.

\bibitem[Ross et~al.(2011)Ross, Gordon, and Bagnell]{ross2011reduction}
St{\'e}phane Ross, Geoffrey Gordon, and Drew Bagnell.
\newblock A reduction of imitation learning and structured prediction to
  no-regret online learning.
\newblock In \emph{Proceedings of the fourteenth international conference on
  artificial intelligence and statistics}, pages 627--635. JMLR Workshop and
  Conference Proceedings, 2011.

\bibitem[Agarwal et~al.(2022{\natexlab{a}})Agarwal, Schwarzer, Castro,
  Courville, and Bellemare]{agarwal2022beyond}
Rishabh Agarwal, Max Schwarzer, Pablo~Samuel Castro, Aaron Courville, and
  Marc~G Bellemare.
\newblock Beyond tabula rasa: Reincarnating reinforcement learning.
\newblock \emph{arXiv preprint arXiv:2206.01626}, 2022{\natexlab{a}}.

\bibitem[Kurenkov et~al.(2019)Kurenkov, Mandlekar, Martin-Martin, Savarese, and
  Garg]{kurenkov2019ac}
Andrey Kurenkov, Ajay Mandlekar, Roberto Martin-Martin, Silvio Savarese, and
  Animesh Garg.
\newblock Ac-teach: A bayesian actor-critic method for policy learning with an
  ensemble of suboptimal teachers.
\newblock \emph{arXiv preprint arXiv:1909.04121}, 2019.

\bibitem[Rajeswaran et~al.(2017)Rajeswaran, Kumar, Gupta, Vezzani, Schulman,
  Todorov, and Levine]{rajeswaran2017learning}
Aravind Rajeswaran, Vikash Kumar, Abhishek Gupta, Giulia Vezzani, John
  Schulman, Emanuel Todorov, and Sergey Levine.
\newblock Learning complex dexterous manipulation with deep reinforcement
  learning and demonstrations.
\newblock \emph{arXiv preprint arXiv:1709.10087}, 2017.

\bibitem[Lee et~al.(2020)Lee, Hwangbo, Wellhausen, Koltun, and
  Hutter]{lee2020learning}
Joonho Lee, Jemin Hwangbo, Lorenz Wellhausen, Vladlen Koltun, and Marco Hutter.
\newblock Learning quadrupedal locomotion over challenging terrain.
\newblock \emph{Science robotics}, 5\penalty0 (47):\penalty0 eabc5986, 2020.

\bibitem[Chen et~al.(2021)Chen, Xu, and Agrawal]{chen2021system}
Tao Chen, Jie Xu, and Pulkit Agrawal.
\newblock A system for general in-hand object re-orientation.
\newblock \emph{Conference on Robot Learning}, 2021.

\bibitem[Margolis et~al.(2021)Margolis, Chen, Paigwar, Fu, Kim, Kim, and
  Agrawal]{margolis2021jumping}
Gabriel Margolis, Tao Chen, Kartik Paigwar, Xiang Fu, Donghyun Kim, Sangbae
  Kim, and Pulkit Agrawal.
\newblock Learning to jump from pixels.
\newblock \emph{Conference on Robot Learning}, 2021.

\bibitem[Zhang et~al.(2020)Zhang, Zhao, Zhao, Yin, Yang, and
  Beutel]{zhang2020deep}
Weinan Zhang, Xiangyu Zhao, Li~Zhao, Dawei Yin, Grace~Hui Yang, and Alex
  Beutel.
\newblock Deep reinforcement learning for information retrieval: Fundamentals
  and advances.
\newblock In \emph{Proceedings of the 43rd International ACM SIGIR Conference
  on Research and Development in Information Retrieval}, pages 2468--2471,
  2020.

\bibitem[Levine et~al.(2015)Levine, Finn, Darrell, and Abbeel]{levine2015end}
Sergey Levine, Chelsea Finn, Trevor Darrell, and Pieter Abbeel.
\newblock End-to-end training of deep visuomotor policies.
\newblock \emph{arXiv preprint arXiv:1504.00702}, 2015.

\bibitem[Kumor et~al.(2021)Kumor, Zhang, and Bareinboim]{kumor2021sequential}
Daniel Kumor, Junzhe Zhang, and Elias Bareinboim.
\newblock Sequential causal imitation learning with unobserved confounders.
\newblock \emph{Advances in Neural Information Processing Systems},
  34:\penalty0 14669--14680, 2021.

\bibitem[Swamy et~al.(2022)Swamy, Choudhury, Bagnell, and
  Wu]{swamy2022sequence}
Gokul Swamy, Sanjiban Choudhury, J~Andrew Bagnell, and Zhiwei~Steven Wu.
\newblock Sequence model imitation learning with unobserved contexts.
\newblock \emph{arXiv preprint arXiv:2208.02225}, 2022.

\bibitem[Littman et~al.(1995)Littman, Cassandra, and
  Kaelbling]{littman1995learning}
Michael~L Littman, Anthony~R Cassandra, and Leslie~Pack Kaelbling.
\newblock Learning policies for partially observable environments: Scaling up.
\newblock In \emph{Machine Learning Proceedings 1995}, pages 362--370.
  Elsevier, 1995.

\bibitem[Warrington et~al.(2021)Warrington, Lavington, Scibior, Schmidt, and
  Wood]{warrington2021robust}
Andrew Warrington, Jonathan~W Lavington, Adam Scibior, Mark Schmidt, and Frank
  Wood.
\newblock Robust asymmetric learning in pomdps.
\newblock In \emph{International Conference on Machine Learning}, pages
  11013--11023. PMLR, 2021.

\bibitem[Weihs et~al.(2021)Weihs, Jain, Liu, Salvador, Lazebnik, Kembhavi, and
  Schwing]{weihs2021bridging}
Luca Weihs, Unnat Jain, Iou-Jen Liu, Jordi Salvador, Svetlana Lazebnik,
  Aniruddha Kembhavi, and Alex Schwing.
\newblock Bridging the imitation gap by adaptive insubordination.
\newblock \emph{Advances in Neural Information Processing Systems},
  34:\penalty0 19134--19146, 2021.

\bibitem[Nguyen et~al.(2022)Nguyen, Baisero, Wang, Amato, and
  Platt]{nguyen2022leveraging}
Hai Nguyen, Andrea Baisero, Dian Wang, Christopher Amato, and Robert Platt.
\newblock Leveraging fully observable policies for learning under partial
  observability.
\newblock \emph{arXiv preprint arXiv:2211.01991}, 2022.

\bibitem[Agarwal et~al.(2022{\natexlab{b}})Agarwal, Schwarzer, Castro,
  Courville, and Bellemare]{agarwal2022reincarnating}
Rishabh Agarwal, Max Schwarzer, Pablo~Samuel Castro, Aaron Courville, and
  Marc~G Bellemare.
\newblock Reincarnating reinforcement learning: Reusing prior computation to
  accelerate progress.
\newblock \emph{arXiv preprint arXiv:2206.01626}, 2022{\natexlab{b}}.

\bibitem[Kaelbling et~al.(1998)Kaelbling, Littman, and
  Cassandra]{kaelbling1998planning}
Leslie~Pack Kaelbling, Michael~L Littman, and Anthony~R Cassandra.
\newblock Planning and acting in partially observable stochastic domains.
\newblock \emph{Artificial intelligence}, 101\penalty0 (1-2):\penalty0 99--134,
  1998.

\bibitem[Sutton and Barto(2018)]{sutton2018reinforcement}
Richard~S Sutton and Andrew~G Barto.
\newblock \emph{Reinforcement learning: An introduction}.
\newblock 2018.

\bibitem[Argall et~al.(2009)Argall, Chernova, Veloso, and
  Browning]{argall2009survey}
Brenna~D Argall, Sonia Chernova, Manuela Veloso, and Brett Browning.
\newblock A survey of robot learning from demonstration.
\newblock \emph{Robotics and autonomous systems}, 2009.

\bibitem[Czarnecki et~al.(2019)Czarnecki, Pascanu, Osindero, Jayakumar,
  Swirszcz, and Jaderberg]{czarnecki2019distilling}
Wojciech~M Czarnecki, Razvan Pascanu, Simon Osindero, Siddhant Jayakumar,
  Grzegorz Swirszcz, and Max Jaderberg.
\newblock Distilling policy distillation.
\newblock In \emph{The 22nd International Conference on Artificial Intelligence
  and Statistics}, pages 1331--1340. PMLR, 2019.

\bibitem[Ng et~al.(1999)Ng, Harada, and Russell]{ng1999policy}
Andrew~Y Ng, Daishi Harada, and Stuart Russell.
\newblock Policy invariance under reward transformations: Theory and
  application to reward shaping.
\newblock In \emph{Icml}, volume~99, pages 278--287, 1999.

\bibitem[Schmitt et~al.(2018)Schmitt, Hudson, Zidek, Osindero, Doersch,
  Czarnecki, Leibo, Kuttler, Zisserman, Simonyan,
  et~al.]{schmitt2018kickstarting}
Simon Schmitt, Jonathan~J Hudson, Augustin Zidek, Simon Osindero, Carl Doersch,
  Wojciech~M Czarnecki, Joel~Z Leibo, Heinrich Kuttler, Andrew Zisserman, Karen
  Simonyan, et~al.
\newblock Kickstarting deep reinforcement learning.
\newblock \emph{arXiv preprint arXiv:1803.03835}, 2018.

\bibitem[Chen et~al.(2022)Chen, Hong, Pajarinen, and
  Agrawal]{chen2022redeeming}
Eric Chen, Zhang-Wei Hong, Joni Pajarinen, and Pulkit Agrawal.
\newblock Redeeming intrinsic rewards via constrained optimization.
\newblock \emph{arXiv preprint arXiv:2211.07627}, 2022.

\bibitem[Tessler et~al.(2018)Tessler, Mankowitz, and Mannor]{tessler2018reward}
Chen Tessler, Daniel~J Mankowitz, and Shie Mannor.
\newblock Reward constrained policy optimization.
\newblock \emph{arXiv preprint arXiv:1805.11074}, 2018.

\bibitem[Bhatnagar and Lakshmanan(2012)]{bhatnagar2012online}
Shalabh Bhatnagar and K~Lakshmanan.
\newblock An online actor--critic algorithm with function approximation for
  constrained markov decision processes.
\newblock \emph{Journal of Optimization Theory and Applications}, 153\penalty0
  (3):\penalty0 688--708, 2012.

\bibitem[Boyd et~al.(2004)Boyd, Boyd, and Vandenberghe]{boyd2004convex}
Stephen Boyd, Stephen~P Boyd, and Lieven Vandenberghe.
\newblock \emph{Convex optimization}.
\newblock Cambridge university press, 2004.

\bibitem[Kakade and Langford(2002)]{kakade2002approximately}
Sham Kakade and John Langford.
\newblock Approximately optimal approximate reinforcement learning.
\newblock In \emph{In Proc. 19th International Conference on Machine Learning}.
  Citeseer, 2002.

\bibitem[Schulman et~al.(2015)Schulman, Levine, Abbeel, Jordan, and
  Moritz]{schulman2015trust}
John Schulman, Sergey Levine, Pieter Abbeel, Michael Jordan, and Philipp
  Moritz.
\newblock Trust region policy optimization.
\newblock In \emph{Proceedings of the 32nd International Conference on Machine
  Learning (ICML-15)}, pages 1889--1897, 2015.

\bibitem[Walsman et~al.(2023)Walsman, Zhang, Choudhury, Fox, and
  Farhadi]{walsman2023impossibly}
Aaron Walsman, Muru Zhang, Sanjiban Choudhury, Dieter Fox, and Ali Farhadi.
\newblock Impossibly good experts and how to follow them.
\newblock In \emph{The Eleventh International Conference on Learning
  Representations}, 2023.

\bibitem[Haarnoja et~al.(2018)Haarnoja, Zhou, Abbeel, and
  Levine]{haarnoja2018soft}
Tuomas Haarnoja, Aurick Zhou, Pieter Abbeel, and Sergey Levine.
\newblock Soft actor-critic: Off-policy maximum entropy deep reinforcement
  learning with a stochastic actor.
\newblock \emph{arXiv preprint arXiv:1801.01290}, 2018.

\bibitem[Cheng et~al.(2021)Cheng, Kolobov, and Swaminathan]{cheng2021heuristic}
Ching-An Cheng, Andrey Kolobov, and Adith Swaminathan.
\newblock Heuristic-guided reinforcement learning.
\newblock \emph{Advances in Neural Information Processing Systems},
  34:\penalty0 13550--13563, 2021.

\bibitem[Melnik et~al.(2021)Melnik, Lach, Plappert, Korthals, Haschke, and
  Ritter]{melnik2021using}
Andrew Melnik, Luca Lach, Matthias Plappert, Timo Korthals, Robert Haschke, and
  Helge Ritter.
\newblock Using tactile sensing to improve the sample efficiency and
  performance of deep deterministic policy gradients for simulated in-hand
  manipulation tasks.
\newblock \emph{Frontiers in Robotics and AI}, page~57, 2021.

\bibitem[Andrychowicz et~al.(2017)Andrychowicz, Wolski, Ray, Schneider, Fong,
  Welinder, McGrew, Tobin, Abbeel, and Zaremba]{andrychowicz2017hindsight}
Marcin Andrychowicz, Filip Wolski, Alex Ray, Jonas Schneider, Rachel Fong,
  Peter Welinder, Bob McGrew, Josh Tobin, OpenAI~Pieter Abbeel, and Wojciech
  Zaremba.
\newblock Hindsight experience replay.
\newblock In \emph{Advances in Neural Information Processing Systems}, pages
  5048--5058, 2017.

\bibitem[Pathak et~al.(2017)Pathak, Agrawal, Efros, and
  Darrell]{pathak2017curiosity}
Deepak Pathak, Pulkit Agrawal, Alexei~A Efros, and Trevor Darrell.
\newblock Curiosity-driven exploration by self-supervised prediction.
\newblock In \emph{Proceedings of the 34th International Conference on Machine
  Learning}, pages 2778--2787, 2017.

\bibitem[Paternain et~al.(2019)Paternain, Chamon, Calvo-Fullana, and
  Ribeiro]{paternain2019constrained}
Santiago Paternain, Luiz Chamon, Miguel Calvo-Fullana, and Alejandro Ribeiro.
\newblock Constrained reinforcement learning has zero duality gap.
\newblock \emph{Advances in Neural Information Processing Systems}, 32, 2019.

\bibitem[Rockafellar(1970)]{rockafellar1970convex}
R~Tyrrell Rockafellar.
\newblock \emph{Convex analysis}, volume~18.
\newblock Princeton university press, 1970.

\bibitem[Platt~Jr et~al.(2010)Platt~Jr, Tedrake, Kaelbling, and
  Lozano-Perez]{platt2010belief}
Robert Platt~Jr, Russ Tedrake, Leslie Kaelbling, and Tomas Lozano-Perez.
\newblock Belief space planning assuming maximum likelihood observations.
\newblock 2010.

\bibitem[Mnih et~al.(2015)Mnih, Kavukcuoglu, Silver, Rusu, Veness, Bellemare,
  Graves, Riedmiller, Fidjeland, Ostrovski, Petersen, Beattie, Sadik,
  Antonoglou, King, Kumaran, Wierstra, Legg, and Hassabis]{mnih2015human}
Volodymyr Mnih, Koray Kavukcuoglu, David Silver, Andrei~A Rusu, Joel Veness,
  Marc~G Bellemare, Alex Graves, Martin Riedmiller, Andreas~K Fidjeland, Georg
  Ostrovski, Stig Petersen, Charles Beattie, Amir Sadik, Ioannis Antonoglou,
  Helen King, Dharshan Kumaran, Daan Wierstra, Shane Legg, and Demis Hassabis.
\newblock Human-level control through deep reinforcement learning.
\newblock \emph{Nature}, 2015.

\bibitem[Lillicrap et~al.(2015)Lillicrap, Hunt, Pritzel, Heess, Erez, Tassa,
  Silver, and Wierstra]{lillicrap2015continuous}
Timothy~P Lillicrap, Jonathan~J Hunt, Alexander Pritzel, Nicolas Heess, Tom
  Erez, Yuval Tassa, David Silver, and Daan Wierstra.
\newblock Continuous control with deep reinforcement learning.
\newblock \emph{arXiv preprint arXiv:1509.02971}, 2015.

\bibitem[Ni et~al.(2022)Ni, Eysenbach, and Salakhutdinov]{ni2022recurrent}
Tianwei Ni, Benjamin Eysenbach, and Ruslan Salakhutdinov.
\newblock Recurrent model-free rl can be a strong baseline for many pomdps.
\newblock In \emph{International Conference on Machine Learning}, pages
  16691--16723. PMLR, 2022.

\end{thebibliography}

\newpage

\appendix

\section{Derivations and Proofs} \label{sec:proofs}
\subsection{Derivation of the Dual Problem} \label{ssec:full_derivation}
Denote $\eta_i=J_R(\pi^i_{RL})$, and given the Primal Problem we derived in Eq.~\ref{eq:primal_prob}:
$$
\max _{\pi}{J_{R+I}(\pi, \alpha)} \quad \textrm{subject to} \quad J_R(\pi) \geq \eta_i
$$
The corresponding Lagrangian is:
$$
\mathcal{L}(\pi, \lambda) = 
J_{R+I}(\pi, \alpha) +\lambda \left( J_R(\pi) - \eta_i\right) =
$$
$$
 \mathbb E_{\pi} \left[ \sum_{t=0}^\infty {\gamma^t(r_t-\alpha H^X_t(\pi|\bar{\pi}))}\right] + \lambda  \mathbb E_{\pi} \left[ \sum_{t=0}^\infty {\gamma^tr_t}\right] - \lambda \eta_i =
$$
$$
\mathbb E_{\pi} \left[ \sum_{t=0}^\infty {\gamma^t \left( (1+\lambda)r_t-\alpha H^X_t(\pi|\bar{\pi}) \right) }\right]  - \lambda \eta_i =
$$
$$
\mathbb E_{\pi} \left[ (1+\lambda)\sum_{t=0}^\infty {\gamma^t\left(r_t-\frac{\alpha}{1+\lambda}H^X_t(\pi|\bar{\pi})\right)}\right]  - \lambda \eta_i  = 
$$
$$
(1+\lambda) J_{R+I}(\pi, \frac{\alpha}{1+\lambda})  - \lambda \eta_i 
$$
And therefore out Dual problem is:
$$
\min_{\lambda \geq 0}{\max_{\pi}} \left[ (1+\lambda) J_{R+I}(\pi, \frac{\alpha}{1+\lambda})  - \lambda \eta_i \right]
$$

\subsection{Derivation of update rule for $\lambda$}
The gradient of the dual problem with respect to $\lambda$ is:
$$
\nabla_\lambda \left[ (1+\lambda) J_{R+I}(\pi, \frac{\alpha}{1+\lambda})  - \lambda \eta_i \right] = 
$$
$$
\nabla_\lambda \left[\mathbb E_{\pi} \left[ \sum_{t=0}^\infty {\gamma^t \left( (1+\lambda)r_t-\alpha H^X_t(\pi|\bar{\pi}) \right) }\right]  - \lambda \eta_i \right] =
$$
$$
\mathbb E_{\pi} \left[ \sum_{t=0}^\infty {\gamma^t r_t }\right]  - \eta_i =
$$
$$
J_{R}(\pi)  - \eta_i
$$

\subsection{Duality Gap - Proof for Proposition~\ref{pro:dual_gap}}
We start by restating our assumptions and discuss why they hold for our problem:
\begin{assumption} \label{ass:bound}
The rewards function $r(s,a)$ and the cross-entropy term $H^X(\pi|\bar{\pi})$ are bounded.
\end{assumption}
\textbf{Justification for A.1.} This is achieved by using a clipped version of the cross entropy term. We will add that we found the clipping helpful in practice since it stops this term from reaching infinity when the support of the teacher and the student action distributions are not the same.
\begin{assumption} \label{ass:convergence}
The sequence $\{\eta_i\}_{i=1}^\infty$ is monotincally increasing and converging, i.e., there exist $\eta\in\mathbb R$ such that $\lim_{i\rightarrow\infty} \eta_i = \eta$.
\end{assumption}
\textbf{Justification for A.2.} We will remind that the sequence $\{\eta_i\}_{i=1}^\infty$ is the result of incrementally solving $\max _{\pi_R} {J_R(\pi_R)}$. Having this sequence be monotonically increasing is equivalent to a guarantee for policy improvement in each optimization step, an attribute of several RL algorithms such as Q-learning or policy gradient \citep{sutton2018reinforcement}. Regarding convergence, since the reward is upper bound from assumption~\ref{ass:bound}, then we have an upper bounded monotonically increasing sequence of real numbers, which is proved to converge.
\begin{assumption} \label{ass:epsilon-bound}
There exist $\epsilon>0$ such that for all $i$, the value of $\eta_i$ is at most $J_R(\pi^*)-\epsilon$.
\end{assumption}
\textbf{Justification for A.3.} This assumption is equivelant to stating that $J_R(\pi^*)-J_R(\pi_R)>0$, meaning that $\pi_R$ is never optimal. Without further assumption on the algorithm used to optimize $\pi_R$, we can not guarantee that this will not happen. However, if it happens, it means that we were able to find the optimal policy, and therefore there is no need to continue with the optimization procedure. As a remedy, we will define a new sequence $\{\tilde{\eta}_i\}_{i=1}^\infty$ where $\tilde{\eta}_i = \eta_i-\epsilon$ and will use it instead of the original $\eta_i$. Since $\epsilon$ can be as small as we want, its effect on the algorithm is negligible and it served mainly for the completeness of our theory.

Before going into our proof, we will cite Theorem 1 of \citep{paternain2019constrained}, which is the basis of our results:
\begin{theorem} \label{theorem:main_res}
Given the following optimization problem:
$$
P^* = \max_\pi {\mathbb E_\pi \left[ \sum_{i=0}^H {\gamma^t r_0(s_t,a_t)} \right]} \quad \text{subject to} \quad $$$$
\mathbb E_\pi \left[ \sum_{i=0}^H {\gamma^t r_i(s_t,a_t)} \right] \geq c_i, \quad i=1...m,
$$
And its Dual form:
$$
D^* = \min_{\lambda \geq 0} \max_\pi \mathbb E_\pi \left[ \sum_{i=0}^H {\gamma^t r_0(s_t,a_t)} \right] + 
$$
$$
\lambda \sum_{i=1}^m{\left[ \mathbb E_\pi \left[ \sum_{i=0}^H {\gamma^t r_i(s_t,a_t)} \right] - c_i\right] }
$$
suppose that $r_i$ is bounded for all $i = 0, . . . , m$ and that Slater’s condition holds. Then, strong duality holds, i.e., $P^* = D^*$.
\end{theorem}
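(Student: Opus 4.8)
The plan is to recognize Eq.~\ref{eq:primal_prob} as an instance of the constrained reinforcement learning problem in Theorem~\ref{theorem:main_res} with a single constraint ($m=1$), and then to verify its two hypotheses — bounded rewards and Slater's condition — so that strong duality transfers directly. The first step is to write both the objective and the constraint as expected discounted sums of per-step state-action rewards. The constraint functional $J_R(\pi)=\mathbb E_\pi[\sum_t\gamma^t r(s_t,a_t)]$ is already of this form with reward $r_1:=r$. For the objective, I would fold the imitation term into the reward: since $H^X_t(\pi|\bar\pi)=-\mathbb E_{a\sim\pi(\cdot|\tau_t)}[\log\bar\pi(a|o^T_t)]$, the marginal of $a_t$ under a rollout of $\pi$ given $\tau_t$ is exactly $\pi(\cdot|\tau_t)$, so defining $r^I(s,a):=\mathbb E_{o^T\sim\tilde O(\cdot|s)}[\log\bar\pi(a|o^T)]$ gives $J_I(\pi)=\mathbb E_\pi[\sum_t\gamma^t r^I(s_t,a_t)]$. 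Hence $J_{R+I}(\pi,\alpha)=\mathbb E_\pi[\sum_t\gamma^t r_0(s_t,a_t)]$ with $r_0:=r+\alpha r^I$, matching the objective of Theorem~\ref{theorem:main_res} with $c_1=\eta_i$. The dual in Eq.~\ref{eq:dual_prob} is then literally the dual $D^*$ of that theorem, as confirmed by the Lagrangian computation in Appendix~\ref{ssec:full_derivation}.

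Next I would check the two hypotheses. Boundedness of $r_0$ and $r_1$ follows from Assumption~\ref{ass:bound}: $r$ is bounded by hypothesis, and $r^I=\log\bar\pi$ is bounded because the cross-entropy term is (using the clipping noted after the proposition). The crucial step is Slater's condition, which demands a strictly feasible policy, i.e.\ some $\pi$ with $J_R(\pi)>\eta_i$. Since $\eta_i=J_R(\piaux^i)$ is attained by an actual policy, $\eta_i\le\max_\pi J_R(\pi)=J_R(\pi^*)$. Assumption~\ref{ass:epsilon-bound} (equivalently, replacing $\eta_i$ by $\tilde\eta_i=\eta_i-\epsilon$) guarantees the inequality is strict, $\eta_i< J_R(\pi^*)$, so $\pi^*$ is a strict Slater point. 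With both hypotheses verified, Theorem~\ref{theorem:main_res} yields $P^*=D^*$, i.e.\ no duality gap, for each fixed $\eta_i$. I should emphasize that the maximization in Eqs.~\ref{eq:primal_prob}--\ref{eq:dual_prob} is over all (history-dependent, stochastic) policies; this is what lets the theorem's occupation-measure convexity argument apply despite the non-convexity in any policy parametrization.

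For the limiting claim I would argue as follows. Under Assumption~\ref{ass:convergence} the sequence $\{\eta_i\}$ is monotone and bounded above by $J_R(\pi^*)$, hence converges to some $\eta\in\mathbb R$, and the $\epsilon$-margin keeps $\eta\le J_R(\pi^*)-\epsilon<J_R(\pi^*)$. Therefore the limit level $\eta$ itself satisfies boundedness and Slater's condition, and applying Theorem~\ref{theorem:main_res} once more at the constraint level $\eta$ gives no duality gap directly at the limit. As an alternative route, one could write the primal and dual optimal values as functions $P^*(\cdot)$ and $D^*(\cdot)$ of the constraint level, note that $P^*(\cdot)$ is concave and hence continuous on the interior of its domain, and pass to the limit in $P^*(\eta_i)=D^*(\eta_i)$.

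I expect the main obstacle to be the verification of Slater's condition rather than the reduction itself: the ``for every $\eta_i\in\mathbb R$'' phrasing is only substantive once one restricts to the feasible range $\eta_i<J_R(\pi^*)$, and this strictness is exactly what Assumption~\ref{ass:epsilon-bound} and the $\epsilon$-shift are there to supply — without them the constraint can become tight at the optimum and Slater fails. The only other point requiring care is the rewriting of the cross-entropy objective as a genuine state-action reward $r^I$, which is the bridge from our entropy-augmented objective to the bounded-reward form required by Theorem~\ref{theorem:main_res}.
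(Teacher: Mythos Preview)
Your proposal does not prove Theorem~\ref{theorem:main_res}; it \emph{applies} it. The stated theorem is a general strong-duality result for constrained MDPs, and the paper does not prove it either --- it is quoted from \citet{paternain2019constrained} as the external tool on which Proposition~\ref{pro:dual_gap} rests. A proof of the theorem itself would proceed by passing to occupation measures: both the objective and the constraints become linear functionals of the discounted state-action occupancy, the set of achievable occupancies is convex, and standard convex duality (Slater $\Rightarrow$ zero gap) then closes the argument. None of that appears in your write-up, which instead only verifies the \emph{hypotheses} of the theorem for the specific instance in Eq.~\ref{eq:primal_prob}.

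What you have actually written is a proof of Proposition~\ref{pro:dual_gap}. For the fixed-$\eta_i$ part your argument matches the paper's Proposition~\ref{pro:part_1} almost verbatim: set $r_0=r-\alpha H^X$, $r_1=r$, $c_1=\eta_i$, invoke Assumption~\ref{ass:bound} for boundedness, and use Assumption~\ref{ass:epsilon-bound} to exhibit $\pi^*$ as a strict Slater point. For the limit you take a shorter route than the paper: you observe that the $\epsilon$-margin keeps $\eta<J_R(\pi^*)$, so Slater still holds at the limiting constraint level and Theorem~\ref{theorem:main_res} applies there directly. The paper instead invokes the Fenchel--Moreau theorem, building the sequence of perturbation functions $P_i(\xi)$, establishing their pointwise convergence via monotonicity and boundedness, and transferring concavity from each $P_i$ (which follows from zero duality gap at stage $i$) to the limit $P$. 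Under the stated assumptions both routes are valid; yours is more economical, while the paper's recovers the limiting claim from the per-stage strong duality rather than by re-applying the theorem at the limit level.
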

Having stated that, we will move to prove the two parts of our proposition:
\begin{proposition} \label{pro:part_1}
Given assumption~\ref{ass:bound} and~\ref{ass:epsilon-bound}, for every $\eta_i\in\mathbb R$, the constrained optimization problem Eq.~\ref{eq:primal_prob} and its dual problem defined in Eq.~\ref{eq:dual_prob} do not have a duality gap.
\end{proposition}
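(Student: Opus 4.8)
The plan is to reduce the statement to Theorem~\ref{theorem:main_res} (strong duality for constrained MDPs, \citep{paternain2019constrained}), which asserts that a constrained MDP and its Lagrangian dual have zero duality gap whenever every per-step reward is bounded and Slater's condition holds. The internal reason that theorem is true --- that the set of achievable occupancy measures is convex even though policy optimization over it is not --- is something we inherit rather than reprove; all the work is (i) writing our problem in its exact template and (ii) checking the two hypotheses using Assumptions~\ref{ass:bound} and~\ref{ass:epsilon-bound}.

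\textbf{Step 1: recast Eq.~\ref{eq:primal_prob} as a constrained MDP with $m=1$.} Define the objective reward $r_0(s,a) = r(s,a) - \alpha\, h(s,a)$ with $h(s,a) := -\,\mathbb{E}_{o^T\sim\tilde{O}(\cdot|s)}[\log\bar{\pi}(a\mid o^T)]$, so that $\mathbb{E}_\pi[\sum_t\gamma^t r_0(s_t,a_t)] = J_{R+I}(\pi,\alpha)$ after averaging the per-step cross-entropy over $a_t\sim\pi(\cdot|\tau_t)$ and $o^T_t\sim\tilde{O}(\cdot|s_t)$; the single constraint reward is $r_1 = r$ with threshold $c_1=\eta_i$, so the constraint reads $J_R(\pi)\ge\eta_i$. (If one insists on history-dependent policies $\pi:\tau\to\Delta(\mathcal{A})$, the same construction is carried out on the history/belief MDP and nothing downstream changes.) With this identification the Lagrangian and dual of Eq.~\ref{eq:primal_prob} are exactly $\mathcal{L}(\pi,\lambda)$ and Eq.~\ref{eq:dual_prob}, as in Section~\ref{ssec:full_derivation}.

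\textbf{Step 2: verify the hypotheses.} Boundedness: by Assumption~\ref{ass:bound}, $r$ and the (clipped) cross-entropy $H^X(\pi|\bar{\pi})$ are bounded, hence so is $h$ and therefore $r_0 = r-\alpha h$, while $r_1=r$ is bounded trivially --- this is precisely where the clipping matters, since otherwise $h$ (and hence $r_0$) could equal $+\infty$ on action pairs outside the support of $\bar{\pi}$, and Theorem~\ref{theorem:main_res} would not apply. Slater's condition: we need a strictly feasible policy, i.e. some $\pi$ with $J_R(\pi)>\eta_i$. Take $\pi=\pi^*$, the reward-optimal policy of Eq.~\ref{eq:env_obj}; by Assumption~\ref{ass:epsilon-bound} (equivalently, after passing to the shifted thresholds $\tilde{\eta}_i=\eta_i-\epsilon$ from its justification), $\eta_i \le J_R(\pi^*)-\epsilon < J_R(\pi^*)$, so $\pi^*$ strictly satisfies the constraint. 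Applying Theorem~\ref{theorem:main_res} with $m=1$ then yields $P^*=D^*$, i.e. no duality gap, for every such $\eta_i\in\mathbb{R}$.

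The main obstacle, such as it is, lies in Step~1 rather than Step~2: Theorem~\ref{theorem:main_res} is stated for state-action rewards, whereas our imitation term is naturally written through the teacher's observation $o^T$, which is only stochastically determined by the state. The fix is exactly the conditional expectation used above in defining $h(s,a)$: averaging $-\log\bar{\pi}(a\mid o^T)$ over $o^T\sim\tilde{O}(\cdot|s)$ produces a genuine bounded state-action reward whose discounted return equals $J_I(\pi)$, so the hypotheses of Theorem~\ref{theorem:main_res} are literally met. Once that translation is in place, boundedness and Slater are immediate from Assumptions~\ref{ass:bound} and~\ref{ass:epsilon-bound}, and the conclusion of Proposition~\ref{pro:part_1} follows.
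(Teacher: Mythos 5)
Your proposal is correct and follows essentially the same route as the paper's proof: both reduce Eq.~\ref{eq:primal_prob} to the constrained-MDP template of Theorem~\ref{theorem:main_res} via the identification $r_0 = r - \alpha H^X$, $r_1 = r$, $c_1 = \eta_i$, then verify boundedness from Assumption~\ref{ass:bound} and Slater's condition from Assumption~\ref{ass:epsilon-bound} using the strict feasibility of $\pi^*$. Your Step~1 is in fact somewhat more careful than the paper's one-line identification, since you make explicit how the observation-dependent cross-entropy becomes a genuine bounded state-action reward, but the argument is the same.
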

\begin{proof}
We align our problem with Theorem~\ref{theorem:main_res} notations by denote as follows:

$$
r_0:r_t-\alpha H^X_t, \quad r_1:r_t, \quad c_1:\eta_i
$$

And we can see that our problem is a specific case of the optimization problem defined above. For every $\eta_i$, there is a set feasible solutions in the form of an $\epsilon$-neighborhood of $\pi^*$. This holds since $J_R(\pi^*)>J_R(\pi)-\epsilon$ for every $\pi\notin\pi^*$. Therefore, Slater’s condition holds as it required that the feasible solution set will have an interior point. Together with assumption~\ref{ass:bound}, we have all that we need to claim that Theorem~\ref{theorem:main_res} applies to our problem. Therefore, there is no duality gap.
\end{proof}

\begin{proposition}
Given all our assumptions, the constrained optimization problem at the limit:
$$
\max _{\pi}{J_{R+I}(\pi, \alpha)} \quad \textrm{subject to} \quad J_R(\pi) \geq \eta
$$
has no duality gap.
\end{proposition}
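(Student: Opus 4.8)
The plan is to reduce the limiting problem to the situation already settled in Proposition~\ref{pro:part_1} by observing that the limiting constraint level $\eta$ is still a strictly feasible (Slater) level, and then invoking Theorem~\ref{theorem:main_res} verbatim. First I would record the elementary consequence of Assumptions~\ref{ass:convergence} and~\ref{ass:epsilon-bound}: since $\{\eta_i\}$ is monotone increasing and converges, the limit $\eta=\lim_{i\to\infty}\eta_i$ exists, and since $\eta_i\le J_R(\pi^*)-\epsilon$ for every $i$ we get $\eta\le J_R(\pi^*)-\epsilon< J_R(\pi^*)$. Consequently the very same $\epsilon$-neighborhood of $\pi^*$ used in the proof of Proposition~\ref{pro:part_1} is strictly feasible for the limiting constraint $J_R(\pi)\ge\eta$ (policies close to $\pi^*$ have return above $J_R(\pi^*)-\epsilon\ge\eta$), so Slater's condition holds at $\eta$. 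Together with the boundedness Assumption~\ref{ass:bound}, Theorem~\ref{theorem:main_res} applies directly to the limiting primal problem and its dual~\eqref{eq:dual_prob} with $\eta_i$ replaced by $\eta$, giving zero duality gap.

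Because the statement is phrased "in the limit'', I would also give the alternative route that passes through the sequence, as a sanity check. Define $P^\star(c)$ and $D^\star(c)$ to be the optimal values of the primal problem~\eqref{eq:primal_prob} and the dual~\eqref{eq:dual_prob} with the constraint level $\eta_i$ replaced by an arbitrary $c\in\mathbb{R}$. Lifting to the space of discounted occupation measures, both $J_R$ and $J_{R+I}(\cdot,\alpha)$ become linear functionals and $\{\pi:J_R(\pi)\ge c\}$ a convex set, so $P^\star$ is a concave, non-increasing function of $c$; likewise the inner maximization in~\eqref{eq:dual_prob} contributes a term affine in $c$ with slope $-\lambda$, so $D^\star(c)$ is a pointwise minimum over $\lambda\ge0$ of affine functions of $c$, hence concave in $c$ as well. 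A finite concave function on $\mathbb{R}$ is continuous on the interior of its effective domain, and $\eta$ lies in that interior because feasibility persists for all $c$ in a neighborhood of $\eta$ (this uses $\eta< J_R(\pi^*)$). Proposition~\ref{pro:part_1} gives $P^\star(\eta_i)=D^\star(\eta_i)$ for every $i$, and letting $i\to\infty$ with continuity of both value functions at $\eta$ yields $P^\star(\eta)=D^\star(\eta)$.

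The main obstacle — and the reason Assumption~\ref{ass:epsilon-bound} is required rather than just Assumptions~\ref{ass:bound}--\ref{ass:convergence} — is controlling the value functions at the limit point. If the iterates $\eta_i$ were allowed to approach $J_R(\pi^*)$, the limiting constraint would become tight against the boundary of the feasible range: Slater's condition would fail, and the perturbation function $P^\star$ could be discontinuous there, so that the equality $P^\star(\eta_i)=D^\star(\eta_i)$ would not transfer to the limit. Assumption~\ref{ass:epsilon-bound} rules this out by keeping $\eta$ uniformly bounded away from $J_R(\pi^*)$, which is exactly what both routes above exploit. The only remaining point needing care is making "the lifted problem is convex'' precise via the occupation-measure reformulation; boundedness of the reward and of the clipped cross-entropy term (Assumption~\ref{ass:bound}) guarantees the relevant functionals are finite, so the reformulation and hence the concavity/continuity arguments are legitimate.
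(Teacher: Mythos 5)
Your proof is correct, and your first route is genuinely different from --- and more direct than --- the paper's argument. The paper proves the limiting case via the Fenchel--Moreau theorem: it defines the perturbation functions $P_i(\xi)$ of the finite-$i$ problems, establishes their pointwise convergence by a case analysis on $\xi$ (showing $P_i(\xi)\to-\infty$ for $\xi>\epsilon$ and monotone bounded convergence for $\xi\le\epsilon$), deduces concavity of the limit from concavity of each $P_i$ (which in turn comes from Proposition~\ref{pro:part_1}), and then invokes strong duality for concave perturbation functions. You instead observe that the limit point $\eta=\lim_i\eta_i$ still satisfies $\eta\le J_R(\pi^*)-\epsilon$ by Assumptions~\ref{ass:convergence} and~\ref{ass:epsilon-bound}, so the limiting problem is simply another instance of the constrained problem to which Theorem~\ref{theorem:main_res} (equivalently, Proposition~\ref{pro:part_1} itself) applies verbatim: $\pi^*$ is strictly feasible, Slater's condition holds, rewards are bounded by Assumption~\ref{ass:bound}, done. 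This one-step argument buys simplicity and sidesteps a subtlety the paper glosses over, namely that $\lim_i P_i(\xi)$ must be identified with the perturbation function of the limiting problem (a right-continuity-in-the-constraint-level fact that is not spelled out there). Your second route --- concavity and interior-point continuity of the primal and dual value functions in the constraint level $c$, plus $P^\star(\eta_i)=D^\star(\eta_i)$ from Proposition~\ref{pro:part_1} --- is closer in spirit to the paper's perturbation-function machinery and is also sound, provided you make the occupation-measure lifting precise (the cross-entropy penalty is an expectation of the state-action cost $-\log\bar\pi(a|o^T)$, hence linear in the occupation measure, so the lifted program is indeed convex). Your diagnosis of why Assumption~\ref{ass:epsilon-bound} is the load-bearing hypothesis --- it keeps $\eta$ away from the boundary $J_R(\pi^*)$ where Slater's condition and continuity of the value function would both fail --- is exactly right and is, if anything, stated more clearly than in the paper.
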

\begin{proof}
Our proof will be based on the Fenchel-Moreau theorem \citep{rockafellar1970convex} that states: 

\textit{If (i) Slater’s condition holds for the primal problem and (ii) its perturbation function P($\xi$) is concave, then strong duality holds for the primal and dual problems.}

Denote $\eta_{\lim}$ the limit of the sequence. Without loss of generality, we assume that $\eta_{\lim}=J_R(\pi^*)-\epsilon$. If not, we will just adjust $\epsilon$ accordingly. As in the last proof, Slater’s condition holds since there is a set of feasible policies for the problem. Regarding the second requirement, the sequence of perturbation functions for our problem is:
$$
P(\xi)=\lim _{i\rightarrow \infty } P_i(\xi)
$$ $$ \text{where} \quad P_i(\xi)=
\max _{\pi}{J_{R+I}(\pi, \alpha)} \quad $$ $$ \textrm{subject to} \quad J_R(\pi)  \geq \eta_i+\xi
$$

Notice that this is a scalar function since $P_i(\xi)$ is the maximum objective itself, not the policy that induces it. We will now prove that this sequence of functions converges point-wise:
\begin{itemize}
\item For all $\xi>\epsilon$ we claim that  $P(\xi)=\lim _{i\rightarrow \infty } P_i(\xi)=-\infty$. As a reminder $\eta_i$ converged to $J_R(\pi^*)-\epsilon$. It means that there exists $N$ such that for all $n>N$ , we have $|\eta_n-J_R(\pi^*)+\epsilon|<\frac{\xi}{2}-\epsilon$. Moreover, since $J_R(\pi^*)-\epsilon$ is also the upper bound on the series of $\eta_i$ we can remove the absolute value and get:
$$
0\leq J_R(\pi^*)-\epsilon-\eta_n<\frac{\xi}{2}-\epsilon$$
This yields the following constraint:
    $$
    J_R(\pi_\theta)  \geq \eta_n+\xi > J_R(\pi^*) - \frac{\xi}{2}+\xi = J_R(\pi^*) + \frac{\xi}{2}
    $$
    \normalsize
But since $\xi>\epsilon>0$ and $\pi^*$ is the optimal policy, no policies are feasible for this constraint, so from the definition of the perturbation function, we have $P_n(\xi)=-\infty$. This holds for all $n>N$ and, therefore also $\lim _{i\rightarrow \infty } P_i(\xi)=-\infty$.

\item For all $\xi \leq \epsilon$ we will prove convergence to a fixed value. First, we claim that the perturbation function has a lower bound. This is true since the reward function and the cross-entropy are bounded, and the perturbation function value is a discounted sum of them.\\
In addition, the sequence of $P_i(\xi)$ is monotonically decreasing. To see it, remember that the sequence $\{\eta_i\}_{i=1}^\infty$ is monotonically increasing. Since $J_R(\pi)$ is also upper bounded by $J_R(\pi^*)$, then the feasible set of the $(i+1)$ problem is a subset of the feasible set of the $i$ problem, and all those which came before. Therefore if the solution to the $i$ problem is still feasible it will also be the solution to the $i+1$ problem. If not, then it has a lower objective (since it was also feasible in the $i$ problem), resulting in a monotonically decreasing sequence. Finally, for every $\eta_i$ there is at least one feasible solution, $J_R(\pi^*)$, meaning the perturbation function has a real value. To conclude, $\{P_i(\xi)\}_{i=1}^\infty$ is a monotonically decreasing, lower-bounded sequence in $\mathbb R$ in therefore it converged.
\end{itemize}

After we established point-wise convergence to a function $P(\xi)$, all that remain is to proof that this function is concave. According to proposition~\ref{pro:part_1}, each optimization problem doesn’t have a duality gap, meaning its perturbation function is concave. Since every function in the sequence is concave, and there is pointwise convergence, $P(\xi)$ is also concave. To conclude, from the Fenchel-Moreau theorem, our optimization problem doesn’t have a duality gap in the limit.
\end{proof}

\subsection{Performance Difference Estimation - Proof for Proposition 3}

\textit{Proposition:} Let $\rho(s,a,t)$ be the distribution of states, actions, and timesteps currently in the replay buffer. Then the following is an unbiased approximation of the performance difference:
$$J_{R}(\pi)  - J_{R}(\pi_R)=$$
$$
\mathbb E_{(s,a,t)\sim \rho}{[\gamma^t(A_{\pi_R}(s,a)-A_{\pi}(s,a))]}
$$
\textit{Proof:} Let $\pi_{RB}$ be the behavioral policy induced by the data currently in the replay buffer, meaning:

$$
\forall s\in S \quad \pi_{RB}(a|s)=\frac{\sum_{a'\in {RB}(s)}{I_{a'=a}}}{\sum_{a'\in {RB}(s)}{1}}
$$

Using lemma 6.1 from \citep{kakade2002approximately}, for every two policies $\pi$ and $\tilde{\pi}$ We can write:
$$
\eta(\tilde{\pi})-\eta(\pi)
=\eta(\tilde{\pi})-\eta(\pi_{RB})+\eta(\pi_{RB})-\eta(\pi)=$$
$$-[\eta(\pi_{RB})-\eta(\tilde{\pi})]+\eta(\pi_{RB})-\eta(\pi)=$$
$$-\sum_s{\sum_{t=0}^\infty{\gamma^tP(s_t=s|\pi_{RB})\sum_a{\pi_{RB}(a|s)A_{\tilde{\pi}}(s,a)}}}+$$
$$\sum_s{\sum_{t=0}^\infty{\gamma^tP(s_t=s|\pi_{RB})\sum_a{\pi_{RB}(a|s)A_{{\pi}}(s,a)}}}=$$
\small$$\sum_s{\sum_{t=0}^\infty{\gamma^tP(s_t=s|\pi_{RB})\sum_a{\pi_{RB}(a|s)[A_{{\pi}}(s,a)-A_{\tilde{\pi}}(s,a)]}}}
$$
\normalsize
Assuming we can sample tuples of $(s, a, t)$ from our replay buffer and denote this distribution $\rho_{RB}(s,a,t)$ we can write the above equation as:

$$
\eta(\tilde{\pi})-\eta(\pi)=\sum_{s,a,t}{\rho_{RB}(s,a,t)\gamma^t[A_{{\pi}}(s,a)-A_{\tilde{\pi}}(s,a)]}
$$

Which we can approximate by sampling such tuples from the replay buffer.

\section{Experimental Details} \label{sec:exp_details}

In this section, we outline our environment, training process and hyperparameters.

\textbf{Environment details.} The following list contain details about all the environment used to test our algorithm and compare it to the baselines.

\textit{Tiger Door}. A robot must navigate to the goal cell (green), without touching the failure cell (blue). The cells, however, randomly switch locations every episode, and their nature is not observed by the agent. The maze also includes a pink button that reveals the correct goal location. Pixel observations with discrete action space.

\textit{Lava Crossing}. A minigrid environment where the agent starts in the top-left corner and needs to navigate through a maze of lava in order to get to the bottom-right corner. The episode ends in failure if the agent steps on the lava. The teacher has access to the whole map, while the student only sees a patch of 5x5 cells in front of it. Pixel observations with discrete action space.

\textit{Memory}.  A minigrid environment. The agent starts in a corridor containing two objects. It then has to go to a nearby room containing a third object, similar to one of the previous two. The agent's goal is to go back and touch the object it saw in the room. The episode ends in success if the agent goes to the correct object and in failure otherwise. While the student has to go to the room to see which object is the current one, the teacher starts with that knowledge and can go to it directly. Pixel observations with discrete action space.

\textit{Light-Dark Ant}. A Mujoco Ant environment with a fixed goal and a random starting position. The starting position and the goal are located at the "dark" side of the room, where the agent has access only to a noisy measurement of its current location. It has to take a detour through the "lighted" side of the room, where the noise is reduced significantly, enabling it to understand its location. On the other hand, the teacher has access to its precise location at all times, enabling it to go directly to the goal. This environment is inspired by a popular POMDP benchmark ~\citep{platt2010belief}. Proprioceptive observation with continuous action space.

\textbf{Training process. }Our algorithm optimizes two policies, $\pi$, and $\pi_R$, using off-policy Q-learning. The algorithm itself is orthogonal to the exact details of how to perform this optimization. For the discrete Gridworld domains (\textit{Tiger Door}, \textit{Memory} and \textit{Lava Crossing}), we used DQN \citep{mnih2015human} with soft target network updates, as proposed by \citep{lillicrap2015continuous}, which has shown to improve the stability of learning. For the rest of the continuous domains, we used SAC \citep{haarnoja2018soft} with the architectures of the actor and critic chosen similarly and with a fixed entropy coefficient. For both DQN and SAC, we set the soft target update parameter to 0.005. As was mentioned in the paper, we represent the Q function using to separate networks, one for estimating $Q_R$ and another for estimating $Q_E$. When updating a Q function, it has to be done with respect to some policy. We found that doing so with respect to policy $\pi$ yields stable performance across all environments.

For \textit{Tiger Door}, \textit{Memory}, and \textit{Lava Crossing}, the teacher is a shortest-path algorithm executed over the grid map. For \textit{Light-Dark Ant}, the teacher is a policy trained using RL over the teacher's observation space until achieving a success rate of 100\%. In all of our experiments, we average performance over 5 random seeds and present the mean and 95\% confidence interval.

For all proprioceptive domains, we used a similar architecture across all algorithms. The architecture includes two fully-connected (FC) layers for embedding the past observations and actions separately. These embeddings are then passed through a Long Short-Term Memory (LSTM) layer to aggregate the inputs across the temporal domain. Additionally, the current observation is embedded using an FC layer and concatenated with the output of the LSTM. The concatenated representation is then passed through another fully-connected network with two hidden layers, which outputs the action. The architecture for pixel-based observations are the same, with the observations encoded by a Convolutional Neural Network (CNN) instead of FC. The number of neurons in each layer is determined by the specific domain. 
The rest of the hyperparameters used for training the agents are summarized in~\ref{fig:hyper_table}.

Our implementation is based on the code released by \citep{ni2022recurrent}.

\textbf{Fair Hyperparameter Tuning.} We attempt to ensure that comparisons to baselines are fair. In particular, as part of our claim that our algorithm is more robust to the choice of its hyperparameters, we took the following steps. First, we re-implemented all baselines, and while conducting experiments, maintained consistent joint hyperparameters across the various algorithms. Second, all the experiments of our own algorithm, TGRL, used the same hyperparameters. We used $\alpha=3$, initial $\lambda$ equal to 9 (and so the effective coefficient $\frac{\alpha}{1+\lambda}=0.3$) and coefficient learning rate of $3\mathrm{e}{-3}$. Finally, for every one of the baselines we performed for each environment a search over all the algorithm-specific hyperparameters with N=8 different values for each one and report the best results (besides for COSIL, where we also report the average performance across hyperparameters). 

\section{Additional Results} \label{sec:more_res}
Here we record additional results that were summarized or deferred in Section 4. In particular:

\textbf{Environments without information differences}. Determining if the information difference between the teacher and the student in a given environment will lead to a sub-optimal student is a complex task, as it is dependent on the specific task and the observations available to the agent, which can vary significantly across different environments. As such, it can be challenging to know beforehand if this problem exists or not. In the following experiment, we demonstrate that even in scenarios where this problem does not exist, the use of our proposed TGRL algorithm yields results that are comparable to those obtained using traditional Teacher-Student Learning (TSL) methods, which are typically considered the best approach in such scenarios. This highlights the robustness and versatility of our proposed approach.

The experiment includes three classic POMDP environments from \citep{ni2022recurrent}. These environments are a version of the Mujoco \textit{Hopper}, \textit{Walker2D}, and \textit{HalfCheetah} environments, where the agent only have access to the joint positions but not to their velocities. The teacher, however, has access to both positions and velocities. As can be seen in Figure~\ref{fig:no_imitation_gap}, TGRL converges a bit slower than TSL but still manage to converge to the teacher's performance.

\textbf{Full training curves for Shadow Hand experiments.} In Figure~\ref{fig:full_shadow_hand}, we provide the full version of the training curves that appears in Figure~\ref{fig:hand_manipulation}.

\begin{figure*}[]
\centering
\begin{tabular}{|l|cllcc|}
\hline
                                        & \multicolumn{1}{l|}{\textbf{Tiger Door}} & \multicolumn{1}{l|}{\textbf{Lava Crossing}} & \multicolumn{1}{l|}{\textbf{Memory}} & \multicolumn{1}{l|}{\textbf{Light-Dark Ant}} & \multicolumn{1}{l|}{\textbf{Shadow Hand}} \\ \hline
Max ep. length                          & \multicolumn{1}{c|}{100}                 & \multicolumn{1}{c|}{225}                    & \multicolumn{1}{c|}{121}             & \multicolumn{1}{c|}{100}                     & 100                                       \\ \cline{2-4}
Collected ep. per iter.                 & \multicolumn{3}{c|}{5}                                                                                                        & \multicolumn{1}{c|}{10}                      & 120                                       \\
RL updates per iter.                    & \multicolumn{3}{c|}{500}                                                                                                      & \multicolumn{1}{c|}{1000}                    & 1000                                      \\ \cline{2-6} 
Optimizer                               & \multicolumn{5}{c|}{Adam}                                                                                                                                                                                                \\
Learning rate                           & \multicolumn{5}{c|}{3e-4}                                                                                                                                                                                                \\
Discount factor ($\gamma$) & \multicolumn{5}{c|}{0.9}                                                                                                                                                                                                 \\ \cline{2-6} 
Batch size                              & \multicolumn{3}{c|}{32}                                                                                                       & \multicolumn{1}{c|}{128}                     & 128                                       \\
LSTM hidden size                        & \multicolumn{3}{c|}{128}                                                                                                      & \multicolumn{1}{c|}{256}                     & 128                                       \\
Obs. embedding                          & \multicolumn{3}{c|}{16}                                                                                                       & \multicolumn{1}{c|}{32}                      & 128                                       \\
Actions embedding                       & \multicolumn{3}{c|}{16}                                                                                                       & \multicolumn{1}{c|}{32}                      & 16                                        \\
Hidden layers after LSTM                & \multicolumn{3}{c|}{{[}128,128{]}}                                                                                            & \multicolumn{1}{c|}{{[}512,256{]}}           & {[}512, 256, 128{]}                       \\ \hline
\end{tabular}
\caption{Hyperparameters table.}
\label{fig:hyper_table}
\end{figure*}

\begin{figure*}[!h]
    \centering
    \includegraphics[width=\linewidth]{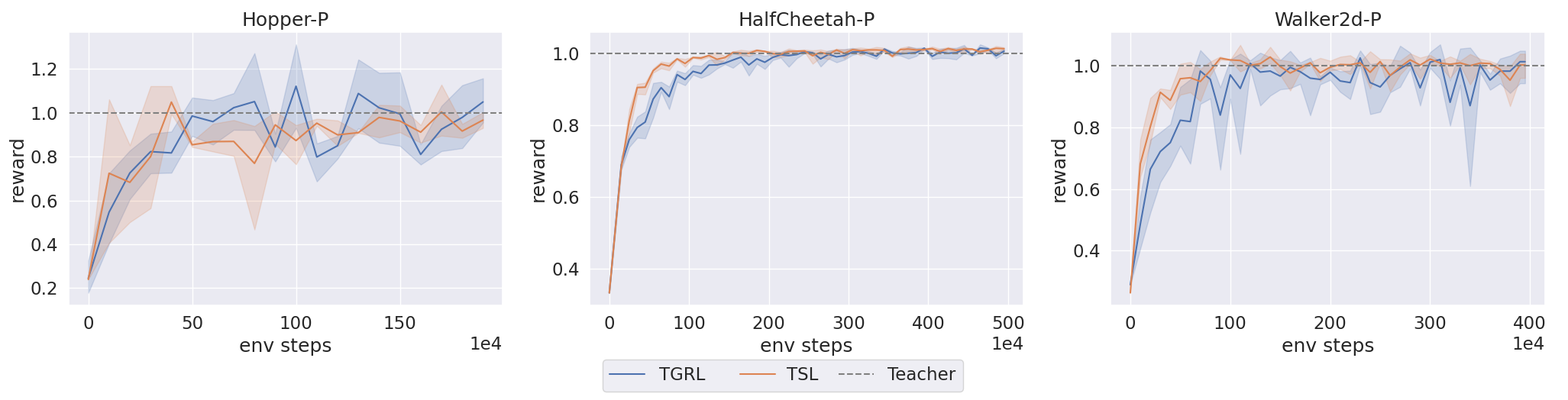}
    \caption{TGRL versus Teacher-Student Learning on domains without information difference. The rewards are normalized based on the teachers' performance.}
    \label{fig:no_imitation_gap}
\end{figure*}

\begin{figure*}[!h]
    \centering
    \includegraphics[width=\linewidth]{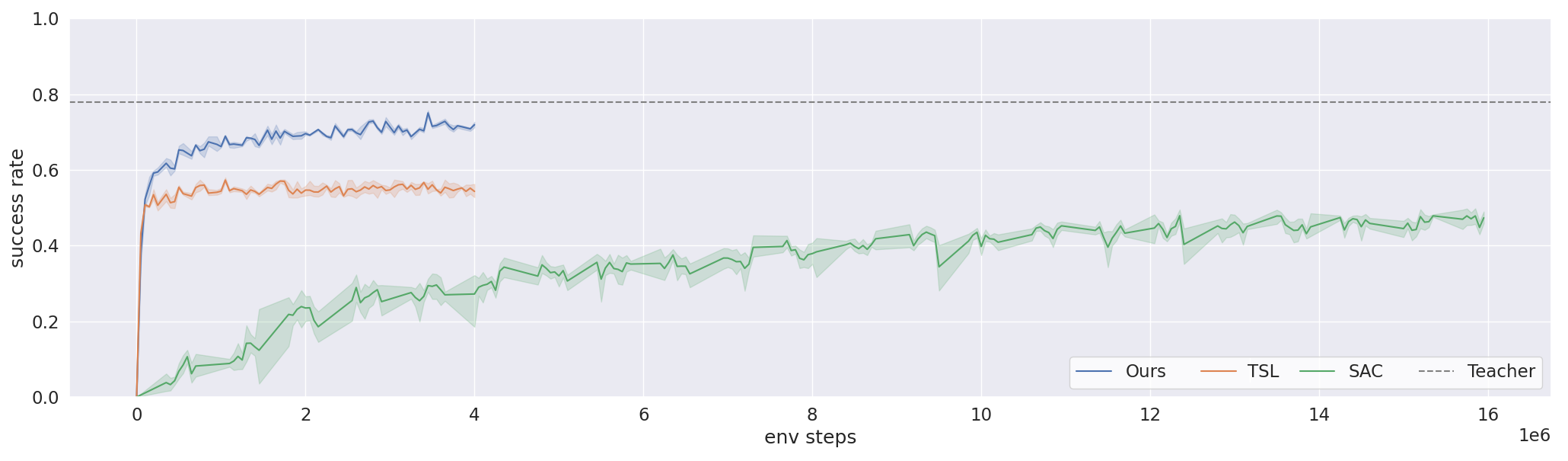}
    \caption{Full training curve of \textit{Shadow Hand} pen reorientation with tactile sensors task}
    \label{fig:full_shadow_hand}
\end{figure*}

\begin{figure*}[!h]
    \centering
    \includegraphics[width=\linewidth]{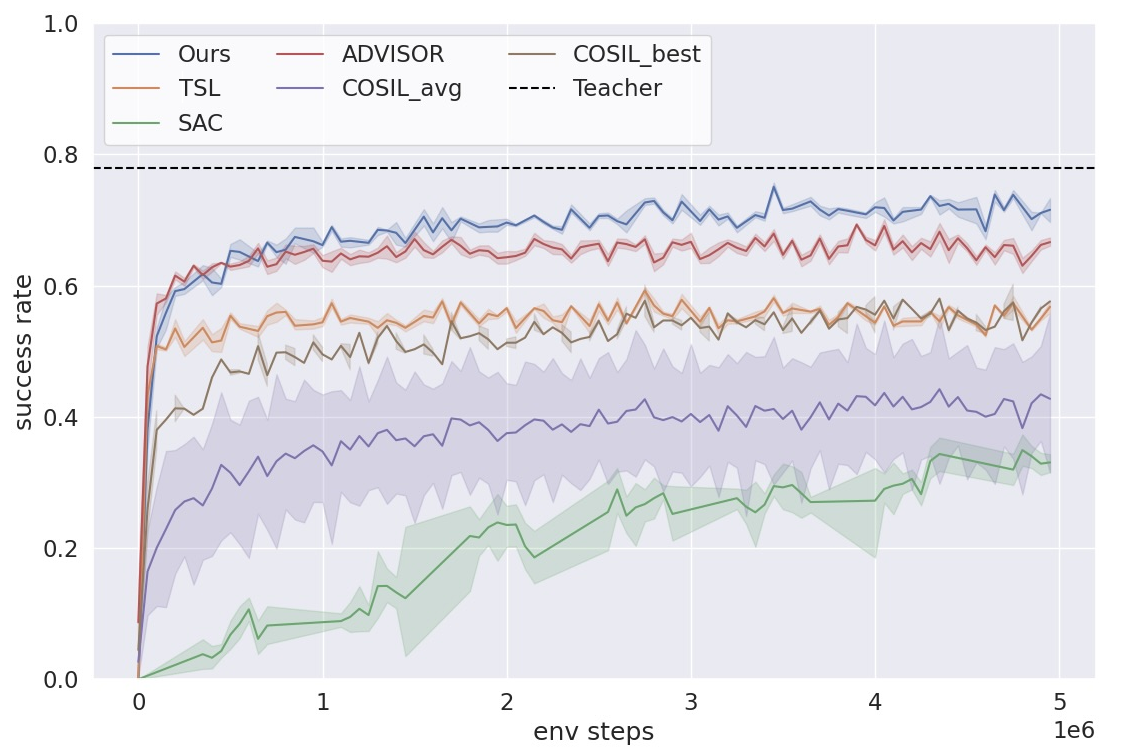}
    \caption{Comparison to baselines of \textit{Shadow Hand} pen reorientation with tactile sensors task}
    \label{fig:full_shadow_hand}
\end{figure*}

\end{document}